\documentclass{article}
\PassOptionsToPackage{numbers}{natbib}

\usepackage[final]{nips_2016}


\usepackage[utf8]{inputenc} 
\usepackage[T1]{fontenc}    
\usepackage{hyperref}       
\usepackage{url}            
\usepackage{booktabs}       
\usepackage{amsfonts}       
\usepackage{nicefrac}       
\usepackage{microtype}      

\usepackage{amsmath}
\usepackage{amsfonts}
\usepackage{amsthm}
\usepackage{algorithm}
\usepackage{algorithmic}
\usepackage{graphicx}

\usepackage{times}
\usepackage{graphicx} 
\usepackage{subfigure} 


\usepackage{algorithm}
\usepackage{algorithmic}

\usepackage{hyperref}



\newtheorem{theorem}{Theorem}
\newtheorem{corollary}{Corollary}
\newtheorem{lemma}{Lemma}

\graphicspath{FiguresTalk/}



\title{Nonstationary Distance Metric Learning}

            
            \author{
  Kristjan Greenewald  \\
  EECS Department\\
  University of Michigan\\
  Ann Arbor, MI 48109 \\
  \texttt{greenewk@umich.edu} \\
  \And
  Stephen Kelley \\
  MIT Lincoln Laboratory \\
  Lexington, MA 02420 \\
  \texttt{stephen.kelley@ll.mit.edu} \\
  \And
  Alfred O. Hero III \\
 EECS Department\\
  University of Michigan\\
  Ann Arbor, MI 48109 \\
  \texttt{hero@umich.edu} \\
  }



\begin{document}
\maketitle

\begin{abstract}
Recent work in distance metric learning has focused on learning transformations of data that best align with provided sets of pairwise similarity and dissimilarity constraints.  
The learned transformations lead to improved retrieval, classification, and clustering algorithms due to the better adapted distance or similarity measures. Here, we introduce the problem of learning these transformations when the underlying constraint generation process is nonstationary. This nonstationarity can be due to changes in either the ground-truth clustering used to generate constraints or changes to the feature subspaces in which the class structure is apparent. We propose and evaluate COMID-SADL, an adaptive, online approach for learning and tracking optimal metrics as they change over time that is highly robust to a variety of nonstationary behaviors in the changing metric. We demonstrate COMID-SADL on both real and synthetic data sets and show significant performance improvements relative to previously proposed batch and online distance metric learning algorithms.
\end{abstract}

\section{Introduction}
%
%
%

The effectiveness of many machine learning and data mining algorithms depends on an appropriate measure of pairwise distance between data points that accurately reflects the learning task, e.g., prediction, clustering or classification. The kNN classifier,  K-means clustering, and the Laplacian-SVM semi-supervised classifier are examples of such {\em distance-based} machine learning algorithms. In settings where there is clean, appropriately-scaled spherical Gaussian data, standard Euclidean distance can be utilized.  However, when the data is heavy tailed, multimodal, or contaminated by outliers, observation noise, or irrelevant or replicated features, use of Euclidean inter-point distance can be problematic, leading to bias or loss of discriminative power. 

To reduce bias and loss of discriminative power of distance-based machine learning algorithms, data-driven approaches for optimizing the distance metric have been proposed. These methodologies, generally taking the form of dimensionality reduction or data ``whitening", aim to utilize the data itself to learn a transformation of the data that embeds it into a space where Euclidean distance is appropriate. Examples of such techniques include Principal Component Analysis \cite{bishop2006pattern}, Multidimensional Scaling \cite{hastie2005elements}, covariance estimation \cite{hastie2005elements,bishop2006pattern}, and manifold learning \cite{lee2007nonlinear}. Such unsupervised methods do not exploit human input on the distance metric, and they overly rely on prior assumptions, e.g., local linearity or smoothness.


In distance metric learning one seeks to learn transformations of the data that are well matched to a particular task specified by the user. Point labels or constraints indicating point similarity or dissimilarity are used to learn a transformation of the data such that similar points are ``close" to one another and dissimilar points are distant in the transformed space.  Learning distance metrics in this manner allows a more precise notion of distance or similarity to be defined that is related to the task at hand.

Many supervised and semi-supervised distance metric learning approaches have been developed \cite{kulis2012metric}. This includes online algorithms \cite{kunapuli2012mirror} with regret guarantees for situations where similarity constraints are received sequentially. 

This paper proposes a new method that provides distance metric tracking. 
Specifically, we suppose the underlying ground-truth (or optimal) distance metric from which constraints are generated is evolving over time, in an unknown and potentially nonstationary way. We propose an adaptive, online approach to track the underlying metric as the constraints are received. Our algorithm, which we call COMID-Strongly Adaptive Dynamic Learning (COMID-SADL) is inspired by recent advances in composite objective mirror descent for metric learning \cite{duchi2010composite} (COMID) and the Strongly Adaptive Online Learning (SAOL) framework proposed in \cite{daniely2015strongly}. We prove strong bounds on the dynamic regret of every subinterval, guaranteeing strong adaptivity and robustness to nonstationary metric drift such as discrete shifts, slow drift with a nonstationary drift rate, and combinations thereof.

\subsection{Related Work} \label{sec:related}


Linear Discriminant Analysis (LDA) and Principal Component Analysis (PCA) are classic examples of using linear transformations for projecting data into more interpretable low dimensional spaces.  Unsupervised PCA seeks to identify a set of axes that best explain the variance contained in the data. LDA takes a supervised approach, minimizing the intra-class variance and maximizing the inter-class variance given class labeled data points.

Much of the recent work in Distance Metric Learning has focused on learning Mahalanobis distances on the basis of pairwise similarity/dissimilarity constraints. These methods have the same goals as LDA; pairs of points labeled ``similar" should be close to one another while pairs labeled ``dissimilar" should be distant. MMC \citep{xing2002distance}, a method for identifying a Mahalanobis metric for clustering with side information, uses semidefinite programming to identify a metric that maximizes the sum of distances between points labeled with different classes subject to the constraint that the sum of distances between all points with similar labels be less than some constant.  

Large Margin Nearest Neighbor (LMNN) \citep{weinberger2005distance} similarly uses semidefinite programming to identify a Mahalanobis distance.  In this setting, the algorithm minimizes the sum of distances between a given point and its similarly labeled neighbors while forcing differently labeled neighbors outside of its neighborhood.  This method has been shown to be computationally efficient \citep{weinberger2008fast} and, in contrast to the similarly motivated Neighborhood Component Analysis \citep{goldberger2004neighbourhood}, is guaranteed to converge to a globally optimal solution.  
Information Theoretic Metric Learning (ITML) \citep{davis2007information} is another popular Distance Metric Learning technique. ITML minimizes the Kullback-Liebler divergence between an initial guess of the matrix that parameterizes the Mahalanobis distance and a solution that satisfies a set of constraints.  
For surveys of the vast metric learning literature, see \citep{kulis2012metric,bellet2013survey,yang2006distance}.

In a dynamic environment, it is necessary to track the changing metric at different times, computing a sequence of estimates of the metric, and to be able to compute those estimates online. Online learning \cite{cesa2006prediction} meets these criteria by efficiently updating the estimate every time a new data point is obtained, instead of solving an objective function formed from the entire dataset. Many online learning methods have regret guarantees, that is, the loss in performance relative to a batch method is provably small \cite{cesa2006prediction,duchi2010composite}. In practice, however, the performance of an online learning method is strongly influenced by the learning rate, which may need to vary over time in a dynamic environment \cite{daniely2015strongly,mcmahan2010,duchi2010}. 

Adaptive online learning methods attempt to address the learning rate problem by continuously updating the learning rate as new observations become available. For learning static parameters, AdaGrad-style methods \cite{mcmahan2010,duchi2010} perform gradient descent steps with the step size adapted based on the magnitude of recent gradients. Follow the regularized leader (FTRL) type algorithms adapt the regularization to the observations \cite{mcmahan2014analysis}. Recently, a method called Strongly Adaptive Online Learning (SAOL) has been proposed for learning parameters undergoing $K$ discrete changes. SAOL maintains several learners with different learning rates and selects the best one based on recent performance \cite{daniely2015strongly}. Several of these adaptive methods have provable regret bounds \cite{mcmahan2014analysis,herbster1998tracking,hazan2007adaptive}. These typically guarantee low total regret (i.e. regret from time 0 to time $T$) at every time \cite{mcmahan2014analysis}. SAOL, on the other hand, attempts to have low \emph{static} regret on every subinterval, as well as low regret overall \cite{daniely2015strongly}. This allows tracking of discrete changes, but not slow drift. 


The remainder of this paper is structured as follows. In Section \ref{sec:problem} we formalize the distance metric tracking problem, and section \ref{Sec:COMIDLearn} presents the basic COMID online learner. 
Section \ref{Sec:SAOML} presents our COMID-SADL algorithm, a method of adaptively combining COMID learners with different learning rates. Strongly adaptive bounds on the dynamic regret are presented in Section \ref{Sec:Bounds}, and results on both synthetic data and a text review dataset are presented in Section \ref{sec:results}. Section \ref{sec:conclusion} concludes the paper.

%
%
%
%



\section{Problem Formulation} \label{sec:problem}

Metric learning seeks to learn a metric that encourages data points marked as similar to be close and data points marked as different to be far apart. The time-varying Mahalanobis distance at time $t$ is parameterized by $\mathbf M_t$ as
\begin{equation}
d_{M_t}^2(\mathbf{x},\mathbf{z}) = (\mathbf{x}-\mathbf{z})^T \mathbf M_t (\mathbf{x-z})
\end{equation}
where $\mathbf M_t \in \mathbb{R}^{n\times n} \succeq 0  $.

Suppose a temporal sequence of similarity constraints are given, where each constraint is the triplet $(\mathbf{x}_t,\mathbf z_t,y_t)$, $\mathbf x_t$ and $\mathbf z_t$ are data points in $\mathbb{R}^n$, and the label $y_t = +1$ if the points $\mathbf x_t, \mathbf z_t$ are similar at time $t$ and $y_t = -1$ if they are dissimilar. 

Following \cite{kunapuli2012mirror}, we introduce the following margin based constraints:
\begin{align}
\label{Eq:consts}
t | y_t = 1: \: d_{M_t}^2(\mathbf{x}_t,\mathbf{z}_t) \leq \mu-1;\qquad 
t | y_t = -1: \: d_{M_t}^2(\mathbf{x}_t,\mathbf{z}_t) \geq \mu +1,
\end{align}
where $\mu$ is a threshold that controls the margin between similar and dissimilar points. 
A diagram illustrating these constraints and their effect is shown in Figure \ref{Fig:Con}.
These constraints are softened by penalizing violation of the constraints with a convex loss function $\ell$.  This gives a loss function
\begin{align}
\label{Eq:Objective}
&\mathcal{L}(\{\mathbf M_t,\mu\}) =  \frac{1}{T} \sum_{t=1}^T\ell(y_t(\mu - \mathbf{u}_t^T \mathbf M_t \mathbf{u}_t)) + \rho r(\mathbf M_t) = \frac{1}{T} \sum_{t=1}^T f_t(\mathbf M_t,\mu) 
,
\end{align}
where $\mathbf{u}_t = \mathbf{x}_t - \mathbf{z}_t$, $r$ is the regularizer and $\rho$ the regularization parameter. Kunapuli and Shavlik \cite{kunapuli2012mirror} propose using nuclear norm regularization ($r(\mathbf M) = \|\mathbf M\|_*$) to encourage projection of the data onto a low dimensional subspace (feature selection/dimensionality reduction), and we have also had success with the elementwise L1 norm ($r(\mathbf M) = \|\mathrm{vec}(\mathbf M)\|_1$). In what follows, we develop an adaptive online method to minimize the loss subject to nonstationary smoothness constraints on the sequence of metric estimates $\mathbf M_t$.  
\begin{figure}[htb]
\centering
\includegraphics[width=3.0in]{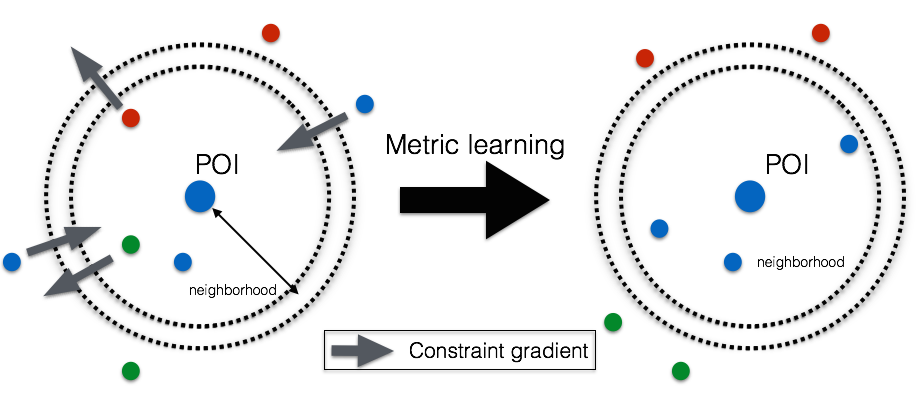}
\caption{Visualization of the margin based constraints \eqref{Eq:consts}, with colors indicating class. The goal of the metric learning constraints is to move target neighbors towards the point of interest (POI), while moving points from other classes away from the target neighborhood. }\label{Fig:Con}
\end{figure}
\section{Composite Objective Mirror Descent Update}
\label{Sec:COMIDLearn}
Viewing the acquisition of new data points as stochastic realizations of the underlying distribution \cite{kunapuli2012mirror} suggests the use of composite objective stochastic mirror descent techniques (COMID). 

For the loss \eqref{Eq:Objective} and learning rate $\eta_t$, COMID \cite{duchi2010composite} gives
\begin{align}
\label{Eq:COMID}
\hat{\mathbf M}_{t+1} =& \arg \min_{\mathbf M \succeq 0} B_\psi(\mathbf M,\hat{\mathbf M}_t) + \eta_t \langle \nabla_M \ell_t(\hat{\mathbf M}_t,\mu_t), \mathbf M-\hat{\mathbf M}_t\rangle + \eta_t \rho \|\mathbf M\|_*\\\nonumber
\hat{\mu}_{t+1} =& \arg \min_{\mu \geq 1} B_\psi(\mu,\hat{\mu}_t) + \eta_t \nabla_\mu \ell_t(\hat{\mathbf M}_t, \hat{\mu}_t)'(\mu - \hat{\mu}_t),
\end{align}
where $B_\psi$ is any Bregman divergence. 
In \citep{kunapuli2012mirror} a closed-form algorithm for solving the minimization in \eqref{Eq:COMID} with $r(\mathbf M) = \|\mathbf M\|_*$ is developed for a variety of common losses and Bregman divergences, involving rank one updates and eigenvalue shrinkage. 

The output of COMID depends strongly on the choice of $\eta_t$. Critically, the optimal learning rate $\eta_t$ depends on the rate of change of $\mathbf{M}_t$ \cite{hall2015online}, and thus will need to change with time to adapt to nonstationary drift. 
Choosing an optimal sequence for $\eta_t$ is clearly not practical in an online setting with nonstationary drift. We thus introduce COMID-Strongly Adaptive Dynamic Learning (COMID-SADL) as a method to adaptively choose an appropriate learning rate $\eta_t$.




\section{COMID-SADL} \label{sec:algorithm}
\label{Sec:SAOML}

Define a set $\mathcal{I}$ of intervals $I = [t_{I1}, t_{I2}]$ such that the lengths $|I|$ of the intervals are proportional to powers of two, i.e. $|I| = I_0 2^j$, $j = 0, \dots$, with an arrangement that is a dyadic partition of the temporal axis. The first interval of length $|I|$ starts at $t=|I|$ (see Figure \ref{Fig:SAOL}), and additional intervals of length $|I|$ exist such that the rest of time is covered. 

Every interval $I$ is associated with a base COMID learner that operates on that interval. Each learner \eqref{Eq:COMID} has a constant learning rate proportional to the inverse square of the length of the interval, i.e. $\eta_t(I) = \eta_0/\sqrt{|I|}$. Each learner (besides the coarsest) at level $j$ ($|I| = I_0 2^j$) is initialized to the last estimate of the next coarsest learner (level $j-1$) (see Figure \ref{Fig:Backdate}). This strategy is equivalent to ``backdating" the interval learners so as to ensure appropriate convergence has occurred before the interval of interest is reached, and is effectively a quantized square root decay of the learning rate. 


Thus, at a given time $t$, a set $\mathrm{ACTIVE}(t) \subseteq \mathcal{I}$ of $\mathrm{floor}(\log_2 t)$ intervals/COMID learners are active, running in parallel. Because the metric being learned is changing with time, learners designed for low regret at different scales will have different performance (analogous to the classical bias/variance tradeoff). In other words, there is a scale $I_{opt}$ optimal at a given time.

To select the appropriate scale, we compute weights $w_t(I)$ that are updated based on the learner's recent estimated regret. Our loss function in \eqref{Eq:Objective} is unbounded, however, it is a relaxation of an underlying 0-1 loss. For purposes of updating the weights, we propose using a nonlinearity to create a 0-1 loss with a smooth transitions scaled by a parameter $c$. We choose a linear transition as the nonlinearity
\begin{equation}
\label{Eq:log}
\ell_{t,c}(x_t | M_t,\mu_t)= \frac{1}{c} \min\{c, \ell_{t}(x_t | M_t,\mu_t)\}.
\end{equation}
We found that using a logistic nonlinearity also gave good results.
We set $c = 2$ in all our experiments. The weight update, inspired by the multiplicative weight (MW) literature, is given by  
\begin{align}
\label{eq:estreg}
r_t(I) =&\left(\sum_I\frac{w_t(I)}{\sum_{I} w_t(I)}\ell_{t,c} (\mathbf{M}_t(I),\mu_t(I))\right) - \ell_{t,c}(\mathbf{M}_t(I),\mu_t(I))\\\nonumber
w_{t+1}(I) =& w_{t}(I) (1 + \eta_I r_{t}(I)), \quad \forall t \in I.
\end{align}
These hold for all $I \in \mathcal{I}$, where $\eta_I = \min\{1/2, 1/\sqrt{|I|}\}$, $\mathbf{M}_t(I),\mu_t(I)$ are the outputs at time $t$ of the learner on interval $I$, and $r_t(I)$ is called the estimated regret of the learner on interval $I$ at time $t$. The initial value of $w(I)$ is $\eta_I$. Essentially, this is highly weighting low loss learners and lowly weighting high loss learners. 

\begin{figure}[htb]
\centering
\includegraphics[width=3.2in]{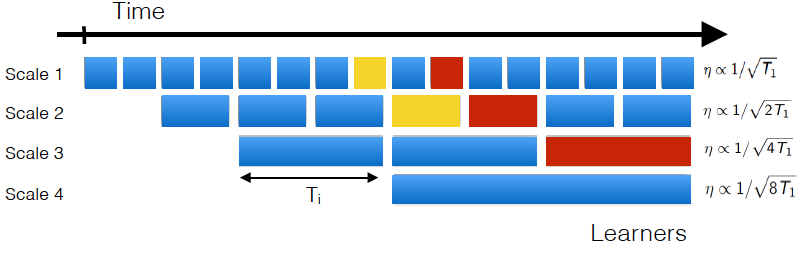}
\caption{COMID-SADL - Learners at multiple scales run in parallel. Recent observed losses for each learner are used to create weights used to select the appropriate scale at each time. Each yellow and red learner is initialized by the output of the previous learner of the same color, that is, the learner of the next shorter scale.} 
\label{Fig:Backdate}\label{Fig:SAOL}
\end{figure}



For any given time $t$, the output of the learner of interval $I \in \mathrm{ACTIVE}(t)$ is randomly selected as the output of COMID-SADL with probability
\begin{align}
\label{Eq:Select}
\mathrm{Pr}(\hat{I}_t = I) = \frac{w_t(I)}{\sum_{I \in \mathrm{ACTIVE}(t)} w_t(I)}. 
\end{align}
COMID-SADL is summarized in Algorithm 1.


\begin{algorithm}[htb]
\caption{COMID-SADL}\label{Alg:SAOML}
\begin{algorithmic}[1]
\STATE Initialize: $w_1(I)$
\FOR{$t = 1$ to $T$}
\STATE Initialize new learner if needed. New learner at scale $j> 0$: initialize to the last estimate of learner at scale $j-1$.
\STATE Choose $\hat{I}\in \mathrm{ACTIVE}(t)$ according to \eqref{Eq:Select}.
\STATE COMID update \eqref{Eq:COMID} for all active learners.
\STATE Set $\mathbf{M}_t \gets \mathbf{M}_t(\hat{I})$, $\mu_t \gets \mu_t(\hat{I})$
\STATE Obtain constraint $(\mathbf{x}_{t},\mathbf{z}_{t},y_{t})$, compute loss $\ell_{t,c}(\cdot)$.
\FOR{$I \in \mathrm{ACTIVE}(t)$}
\STATE Compute estimated regret $r_t(I)$ \eqref{eq:estreg} and update weights: 
$w_{t+1}(I) = w_{t}(I) (1 + \eta_I r_{t}(I)).$
\ENDFOR
\ENDFOR
\STATE Return $\{\mathbf{M}_t,\mu_t\}$.
\end{algorithmic}
\end{algorithm}

\section{Strongly Adaptive Dynamic Regret}
\label{Sec:Bounds}




The standard static regret is defined as
 \begin{equation}
R_{\mathcal{B},static}(I) = \sum_{t\in I} f_t (\hat{\theta}_t) - \min_{\theta \in \Theta} \sum_{t \in I} f_t(\theta).
\end{equation}
where $f_t(\theta_t)$ is a loss with parameter $\theta_t$.
Since in our case the optimal parameter value $\theta_t$ is changing, the static regret of an algorithm $\mathcal{B}$ on an interval $I$ is not useful.
Instead, let $\mathbf w = \{\theta_t\}_{t \in [0,T]}$ be an arbitrary sequence of parameters. Then, the \emph{dynamic regret} of an algorithm $\mathcal{B}$ relative to a comparator sequence $\mathbf w$ on the interval $I$ is defined as
\begin{equation}
R_{\mathcal{B},\mathbf w} (I)= \sum_{t\in I} f_t(\hat{\theta}_t) -\sum_{t\in I} f_t (\theta_t),
\end{equation}
where $\hat{\theta}_t$ are generated by $\mathcal{B}$. This allows for a dynamically changing estimate. 

In \cite{hall2015online} the authors derive dynamic regret bounds that hold over all possible sequences $\mathbf w$ such that $\sum_{t\in I} \|\theta_{t+1} - \theta_t\| \leq \gamma$, i.e. bounding the total amount of variation in the estimated parameter. Without this temporal regularization, minimizing the loss would cause $\theta_t$ to grossly overfit. In this sense, setting the comparator sequence $\mathbf w$ to the ``ground truth sequence" or ``batch optimal sequence" both provide meaningful intuitive bounds. 


Strongly adaptive regret bounds \cite{daniely2015strongly} have claimed that static regret is low on every subinterval, instead of only low in the aggregate. 
We use the notion of dynamic regret to introduce strongly adaptive dynamic regret bounds, proving that \emph{dynamic regret is low on every subinterval $I \subseteq [0,T]$ simultaneously}. 
In the supplementary material, we prove the following: 
\begin{theorem}[Strongly Adaptive Dynamic Regret]
\label{Thm:SADML}
Let $\mathbf w = \{\mathbf M_t\}_{t \in [0,T]}$ be any arbitrary sequence of metrics on the interval $[0,T]$, and define $\gamma_{\mathbf w}(I) = \sum_{t \in I} \|\mathbf M_{t+1} - \mathbf M_t\|$. 
Then COMID-SADL (Algorithm 1) satisfies
\begin{align}
\label{Eq:saRegretML}
E[R_{COMID-SADL,\mathbf w }(I) ] \leq \frac{4}{2^{1/2} - 1} C (1 + \gamma_{\mathbf{w}}(I) )  \sqrt{|I|} + 40 \log (s+1)\sqrt{|I|},
\end{align}
for every subinterval $I = [q,s] \subseteq [0, T]$ simultaneously. $C$ is a constant, and the expectation is with respect to the random output of the algorithm. 
\end{theorem}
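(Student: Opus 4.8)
The plan is to glue together two ingredients over a dyadic decomposition of the arbitrary target interval: a ``meta-layer'' strongly adaptive bound in the style of SAOL \cite{daniely2015strongly}, which controls the loss of COMID-SADL's randomly selected output against any single base learner, and a per-learner composite-objective dynamic regret bound of the Hall--Willett / COMID type \cite{hall2015online,duchi2010composite}, which controls each base learner against the comparator sequence $\mathbf w$. First I would reduce the arbitrary $I=[q,s]$ to dyadic pieces: by the standard dyadic covering argument, $I$ can be written as a disjoint union $I=\bigcup_{j=1}^{k} I_j$ of consecutive intervals $I_j\in\mathcal I$ whose lengths first weakly increase and then weakly decrease along $I$, each length $I_0 2^i$ occurring at most twice, so that $k=O(\log|I|)$ and $\sum_{j=1}^{k}\sqrt{|I_j|}\le\frac{2\sqrt2}{\sqrt2-1}\sqrt{|I|}$ by a geometric-series estimate. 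Each $I_j$ is exactly the interval of one base COMID learner, active throughout $I_j$, so the regret on $I$ splits across the pieces.

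Second, for the meta-layer I would establish, for each $j$,
\[
E\!\left[\sum_{t\in I_j}\ell_{t,c}(\mathbf M_t,\mu_t)-\sum_{t\in I_j}\ell_{t,c}(\mathbf M_t(I_j),\mu_t(I_j))\right]\le O\!\big(\sqrt{|I_j|}\,\log(s+1)\big),
\]
i.e.\ COMID-SADL does essentially as well on $I_j$ as the learner assigned to $I_j$. This is the sleeping-experts multiplicative-weights analysis of \cite{daniely2015strongly} applied verbatim: the truncation \eqref{Eq:log} makes the surrogate loss bounded (this is precisely why the nonlinearity is introduced), the estimated-regret update \eqref{eq:estreg} with step $\eta_I=\min\{1/2,1/\sqrt{|I|}\}$ is the standard MW update, a learner is ``awake'' only on its own interval, and the random selection rule \eqref{Eq:Select} turns the weighted mixture into the realized output in expectation. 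The $\log(s+1)$ factor appears because $O(s)$ learners have been instantiated by time $s$.

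Third, for each base learner on $I_j$, running \eqref{Eq:COMID} with constant rate $\eta_t(I_j)=\eta_0/\sqrt{|I_j|}$ and warm-started from the next-coarser learner (the ``backdating''), I would invoke the composite-objective dynamic regret bound \cite{hall2015online,duchi2010composite}: for the comparator subsequence $\{\mathbf M_t\}_{t\in I_j}$,
\[
\sum_{t\in I_j} f_t(\mathbf M_t(I_j),\mu_t(I_j))-\sum_{t\in I_j} f_t(\mathbf M_t,\mu_t)\le C\big(1+\gamma_{\mathbf w}(I_j)\big)\sqrt{|I_j|},
\]
where $\gamma_{\mathbf w}(I_j)=\sum_{t\in I_j}\|\mathbf M_{t+1}-\mathbf M_t\|$; the PSD constraint and the nuclear-norm term are absorbed into the Bregman/proximal step, and the backdating is what makes $\eta\propto|I_j|^{-1/2}$ the right choice and removes any extra additive initialization penalty. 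Summing over $j$ then gives $E[R_{COMID-SADL,\mathbf w}(I)]\le\sum_j\big(C(1+\gamma_{\mathbf w}(I_j))\sqrt{|I_j|}+O(\sqrt{|I_j|}\log(s+1))\big)$; using $\sum_j\gamma_{\mathbf w}(I_j)=\gamma_{\mathbf w}(I)$ (the pieces partition $I$), $\sum_j\gamma_{\mathbf w}(I_j)\sqrt{|I_j|}\le\gamma_{\mathbf w}(I)\sqrt{|I|}$, and $\sum_j\sqrt{|I_j|}\le\frac{2\sqrt2}{\sqrt2-1}\sqrt{|I|}$, and collecting constants, yields the claimed $\frac{4}{2^{1/2}-1}C(1+\gamma_{\mathbf w}(I))\sqrt{|I|}+40\log(s+1)\sqrt{|I|}$.

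The main obstacle I expect is the third step, not the combinatorics. One has to make the Hall--Willett dynamic regret guarantee genuinely apply to the \emph{composite, PSD-constrained} COMID update with a warm start, checking that the warm start contributes no additive $\mathrm{poly}$ term and that the path-length constant is exactly as claimed. Relatedly, one must reconcile the bounded surrogate $\ell_{t,c}$ that drives the weights with the unbounded $f_t$ in the regret definition --- either by carrying the per-learner bound directly in terms of $f_t$ and using $\ell_{t,c}$ only in the meta-layer, or by folding a boundedness assumption into $C$. By contrast, the dyadic summation and the sleeping-experts meta-bound are essentially bookkeeping on top of \cite{daniely2015strongly}.
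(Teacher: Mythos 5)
Your proposal follows essentially the same route as the paper's proof: the dyadic partition of an arbitrary interval into two geometrically decaying sequences (the paper's Lemma 3, imported from Daniely et al.), a multiplicative-weights/sleeping-experts meta-bound on the estimated regret against each base learner (the paper's Lemmas 1--2, yielding the $O(\log(s+1)\sqrt{|I|})$ term), and the Hall--Willett dynamic regret bound for the composite COMID update with $\eta \propto |I|^{-1/2}$ (the paper's Corollary on ML-COMID), combined by summing $\sqrt{|I_j|}$ geometrically. You also correctly anticipate the one delicate reconciliation between the truncated surrogate $\ell_{t,c}$ and the unbounded $f_t$, which the paper resolves exactly as you suggest --- by assuming $\|\mathbf M\|\le c'$ so that for $c$ equal to the resulting loss bound the truncation is inactive and $\ell_{t,c}=\frac{1}{c}\ell_t$ everywhere.
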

In a dynamic setting, bounds of this type are particularly desirable because they allow for changing \emph{drift rate} and guarantee quick recovery from \emph{discrete changes}.
For instance, suppose $K$ discrete switches (large parameter changes or changes in drift rate) occur at times $t_i$ satisfying $0=t_0 < t_1< \dots< t_K=T$. Then since $\sum_{i = 1}^K \sqrt{|t_{i-1} - t_i|} \leq \sqrt{KT}$, this implies that the total expected dynamic regret on $[0,T]$ remains low ($O(\sqrt{KT})$), while simultaneously guaranteeing that an appropriate learning rate is used on each subinterval $[t_i, t_{i+1}]$. 

\section{Results} \label{sec:results}
\subsection{Synthetic Data}


We run our metric learning algorithms on a synthetic dataset undergoing different types of simulated metric drift. We create a synthetic 2000 point dataset with 2 independent 50-20-30\% clusterings (A and B) in disjoint 3-dimensional subspaces of $\mathbb{R}^{25}$. The clusterings are formed as 3-D Gaussian blobs, and the remaining 19-dimensional subspace is filled with iid Gaussian noise.


We create a scenario exhibiting nonstationary drift, combining continuous drifts and shifts between the two clusterings (A and B). To simulate continuous drift, at each time step we perform a small random rotation of the dataset. The drift profile is shown in \ref{Fig:None1}. For the first interval, partition A is used and the dataset is static, no drift occurs. Then, the partition is changed to B, followed by an interval of first moderate, then fast, and then moderate drift. Finally, the partition reverts back to A, followed by slow drift.

\begin{figure}[htb]
\centering
\includegraphics[width=5.3in]{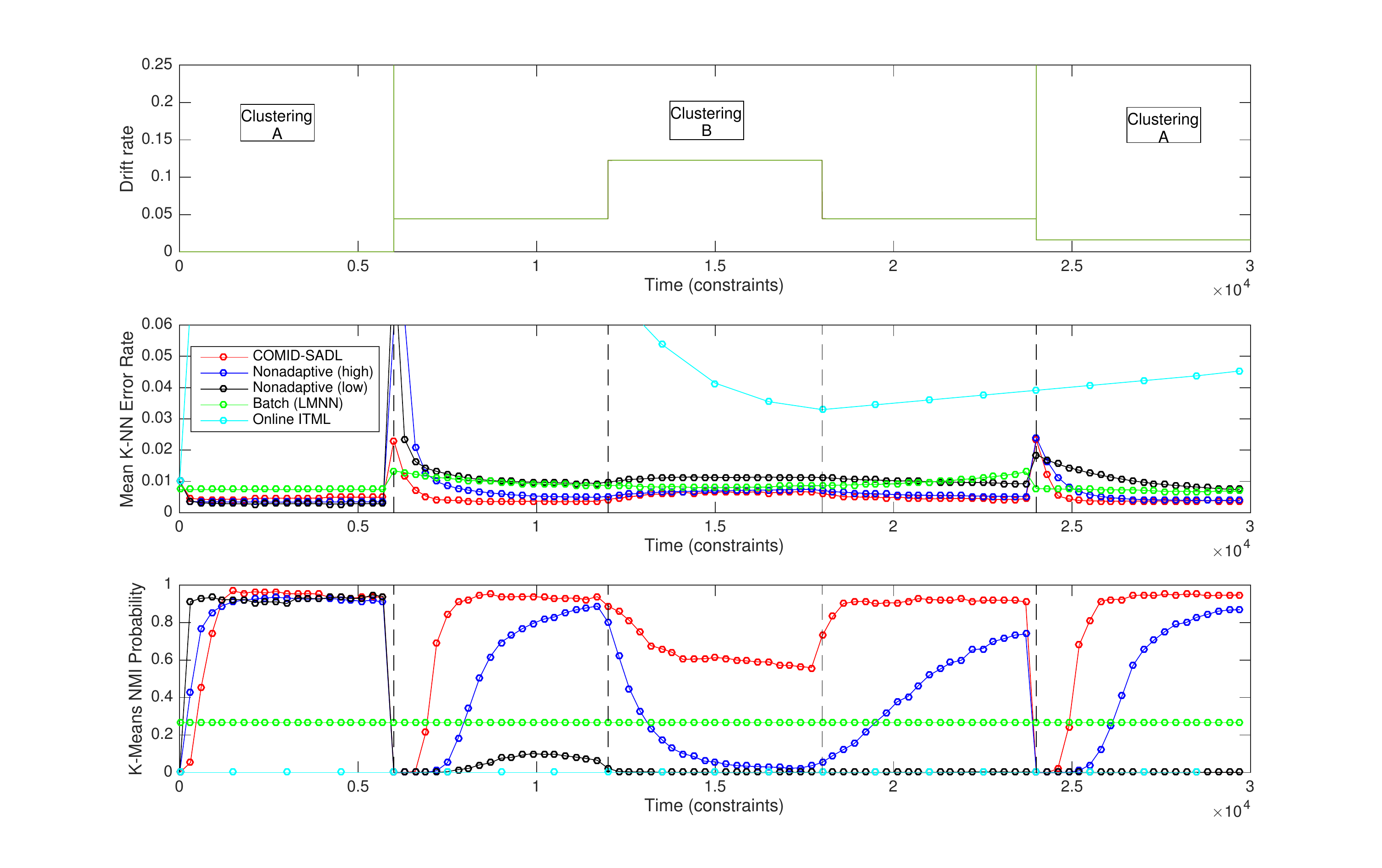}
\caption{
Tracking of a changing metric. Top: Rate of change (scaled Frobenius norm per tick) of the generating metric as a function of time. The large changes result from a change in clustering labels. Metric tracking performance is computed for COMID-SADL (adaptive), nonadaptive COMID (high learning rate), nonadaptive COMID (low learning rate), the batch solution (LMNN), and online ITML, and averaged over 3000 random trials. Shown as a function of time is the mean k-NN error rate (middle) and the probability that the k-means NMI exceeds $0.8$ (bottom). 
Note that COMID-SADL alone is able to effectively adapt to the variety of discrete changes and changes in drift rate.}
\label{Fig:None1}
\end{figure}
We generate a series of $T$ constraints from random pairs of points in the dataset, incorporating the simulated drift, running each experiment with 3000 random trials. For each experiment conducted in this section, we evaluate performance using two metrics. 
We plot the K-nearest neighbor error rate, using the learned embedding at each time point, averaging over all trials. We quantify the clustering performance by plotting the empirical probability that the normalized mutual information (NMI) of the K-means clustering of the unlabeled data points in the learned embedding at each time point exceeds 0.8 (out of a possible 1). We believe clustering NMI, rather than k-NN performance, is a more realistic indicator of metric learning performance, at least in the case where finding a relevant embedding is the primary goal.


In our results, we consider both COMID-SADL, nonadaptive COMID \citep{kunapuli2012mirror}, LMNN (batch) \citep{weinberger2005distance}, and online ITML \cite{davis2007information}. All parameters were set via cross validation. For nonadaptive COMID, we set the high learning rate using cross validation for moderate drift, and we set the low learning rate via cross validation in the case of no drift. The results are shown in Figure \ref{Fig:None1}. Online ITML fails due to its bias agains low-rank solutions \cite{davis2007information}, and the batch method and low learning rate COMID fail due to an inability to adapt. The high learning rate COMID does well at first, but as it is optimized for slow drift it cannot adapt to the changes in drift rate as well or recover quickly from the two partition changes. COMID-SADL, on the other hand, adapts well throughout the entire interval as expected.

\subsection{Clustering Product Reviews}


As an example real data task, we consider clustering Amazon text reviews, using the Multi-Domain Sentiment Dataset \citep{blitzer2007biographies}. We use the 11402 reviews from the Electronics and Books categories, and preprocess the data by computing word counts for each review and 2369 commonly occurring words, thus creating 11402 data points in $\mathbb{R}^{2369}$. Two possible clusterings of the reviews are considered: product category (books or electronics) and sentiment (positive: star rating 4/5 or greater, or negative: 2/5 or less).

Figures \ref{Fig:MLBooks} and \ref{Fig:MLStars} show the first two dimensions of the embeddings learned by static COMID for the category and sentiment clusterings respectively. Also shown are the 2-dimensional standard PCA embeddings, and the k-NN classification performance both before embedding and in each embeddings. As expected, metric learning is able to find embeddings with improved class separability. We emphasize that while improvements in k-NN classification are observed, we use k-NN merely as a way to quantify the separability of the classes in the learned embeddings. In these experiments, we set the regularizer $r(\cdot)$ to the elementwise L1 norm to encourage sparse features. 

\begin{figure}[htb]
\centering
\includegraphics[width=1.5in]{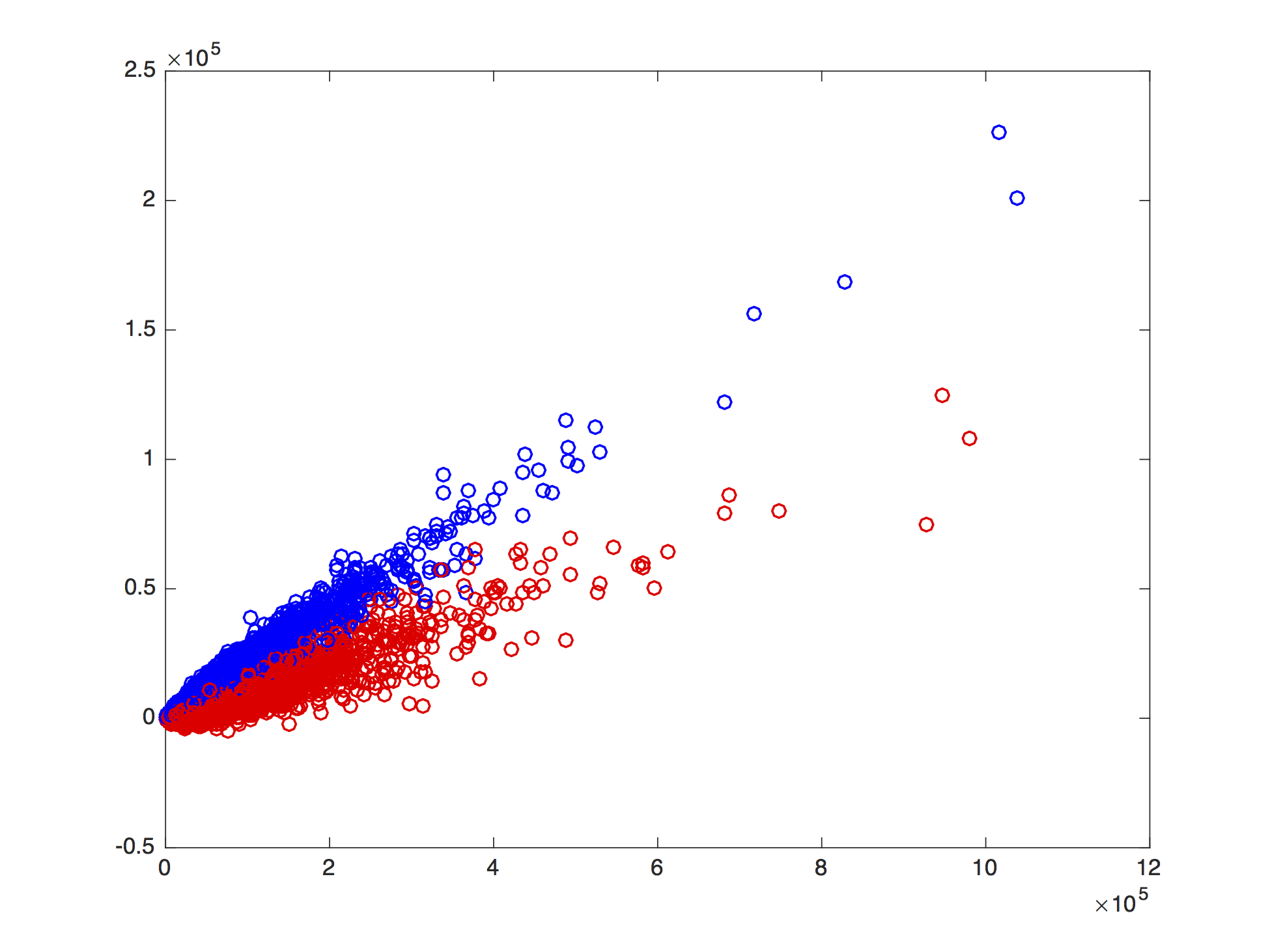}\includegraphics[width=1.5in]{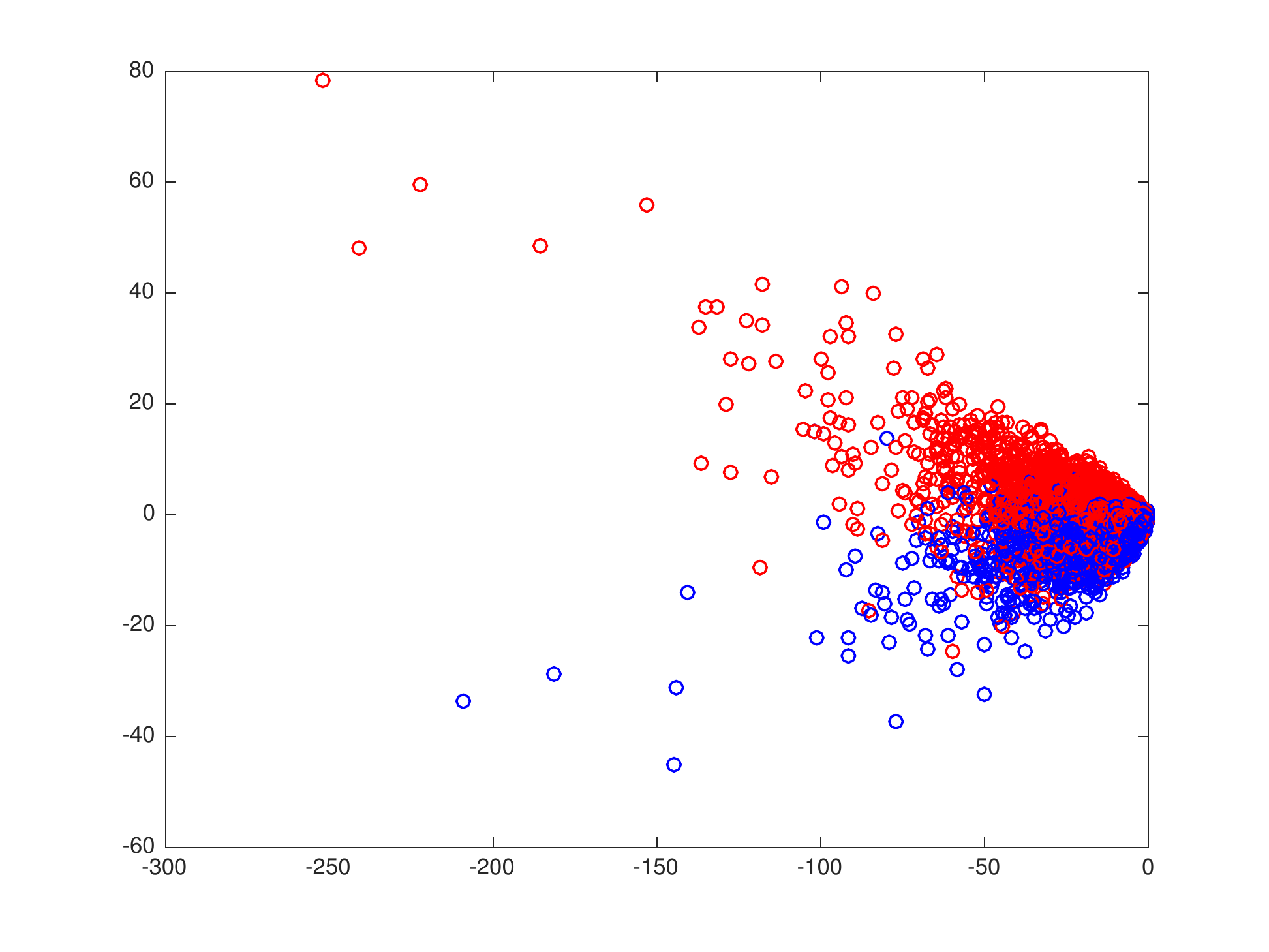}
\caption{Metric learning for product type clustering. Book reviews blue, electronics reviews red. Original LOO k-NN error rate 15.3\%. Left: First two dimensions of learned COMID-SADL embedding (LOO k-NN error rate 11.3\%). Right: embedding from PCA (k-NN error 20.4\%). Note improved separation of the clusters using COMID-SADL. }
\label{Fig:MLBooks}
\end{figure}

\begin{figure}[htb]
\centering
\includegraphics[width=1.5in]{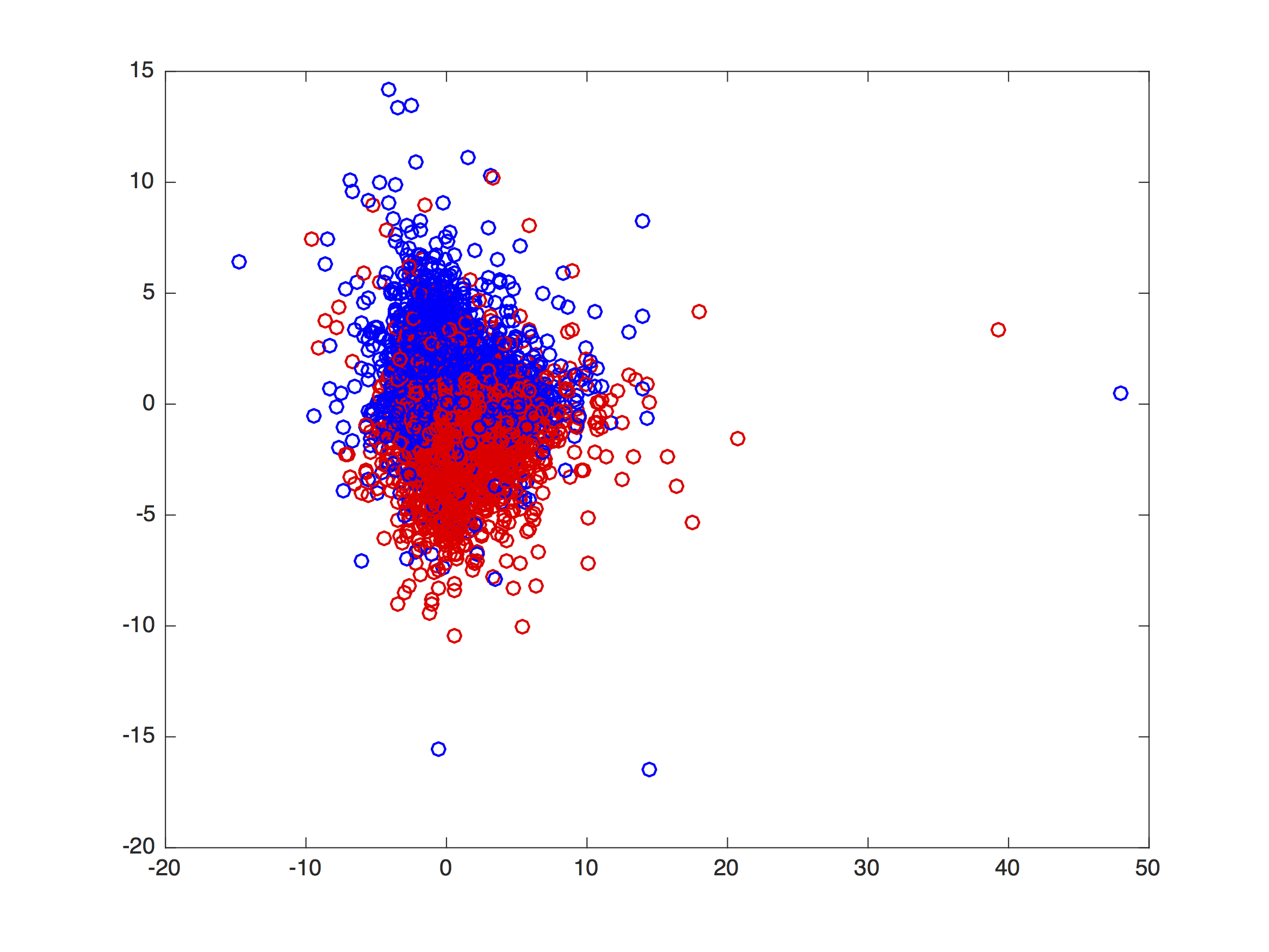}\includegraphics[width=1.5in]{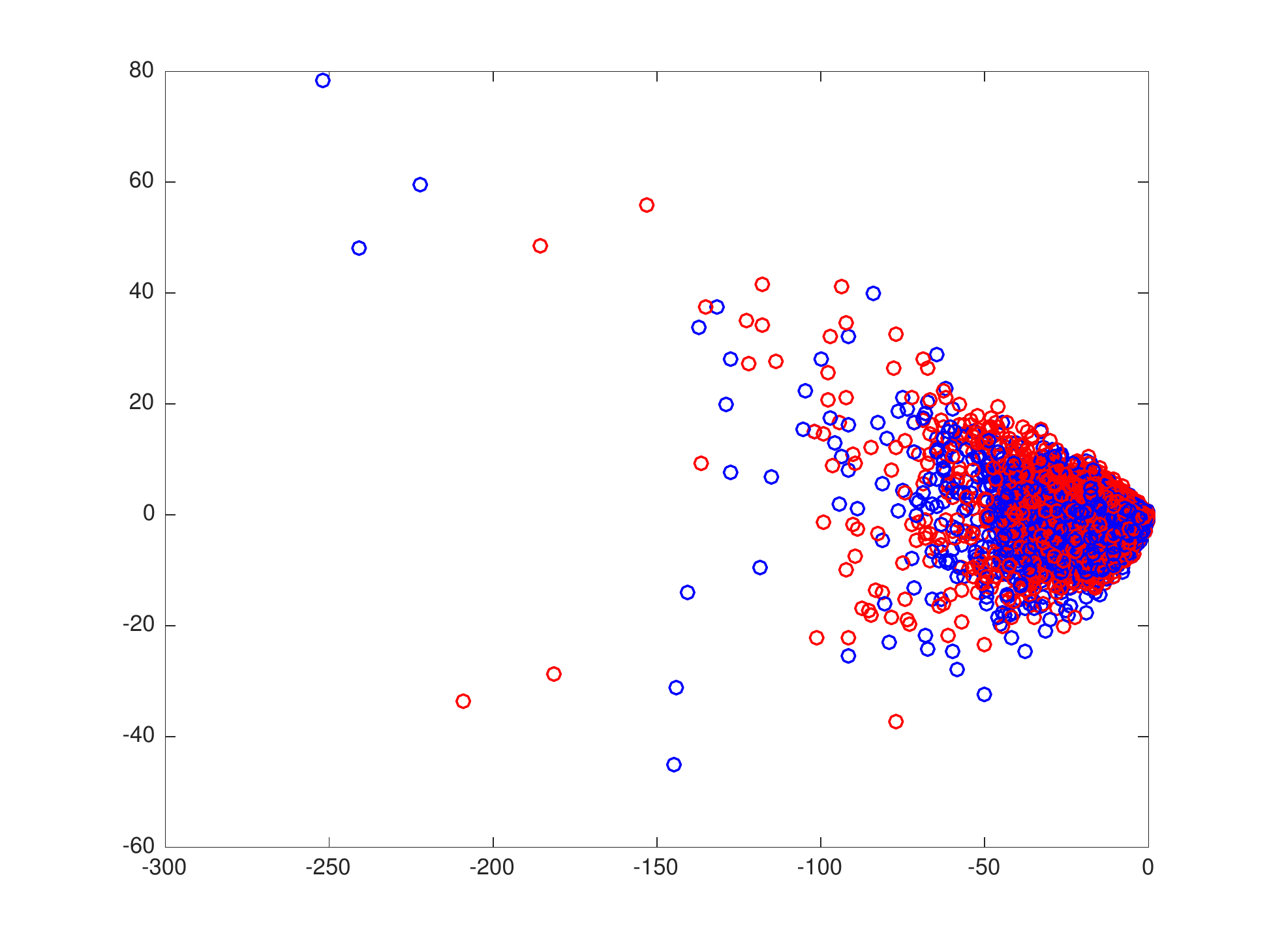}
\caption{Metric learning for sentiment clustering. Positive reviews blue, negative red. Original LOO k-NN error rate 35.7\%. Left: First two dimensions of learned COMID-SADL embedding (LOO k-NN error rate 23.5\%). Right: embedding from PCA (k-NN error 41.9\%). Note improved separation of the clusters using COMID-SADL. }
\label{Fig:MLStars}
\end{figure}

We then conducted drift experiments where the clustering changes. The change happens after the metric learner for the original clustering has converged, hence the nonadaptive learning rate is effectively zero. For each change, we show the k-NN error rate in the learned COMID-SADL embedding as it adapts to the new clustering. Emphasizing the visualization and computational advantages of a low-dimensional embedding, we computed the k-NN error after projecting the data into the first 5 dimensions of the embedding. Also shown are the results for a learner where an oracle allows reinitialization of the metric to the identity at time zero, and the nonadaptive learner for which the learning rate is not increased. Figure \ref{Fig:RealChangeBoth} (left) shows the results when the clustering changes from the four class sentiment + type partition to the two class product type only partition, and Figure \ref{Fig:RealChangeBoth} (right) shows the results when the partition changes from sentiment to product type. In the first case, the similar clustering allows COMID-SADL to significantly outperform even the reinitialized method, and in the second remain competitive where the clusterings are unrelated.




\begin{figure}[htb]
\centering
\includegraphics[width=1.7in]{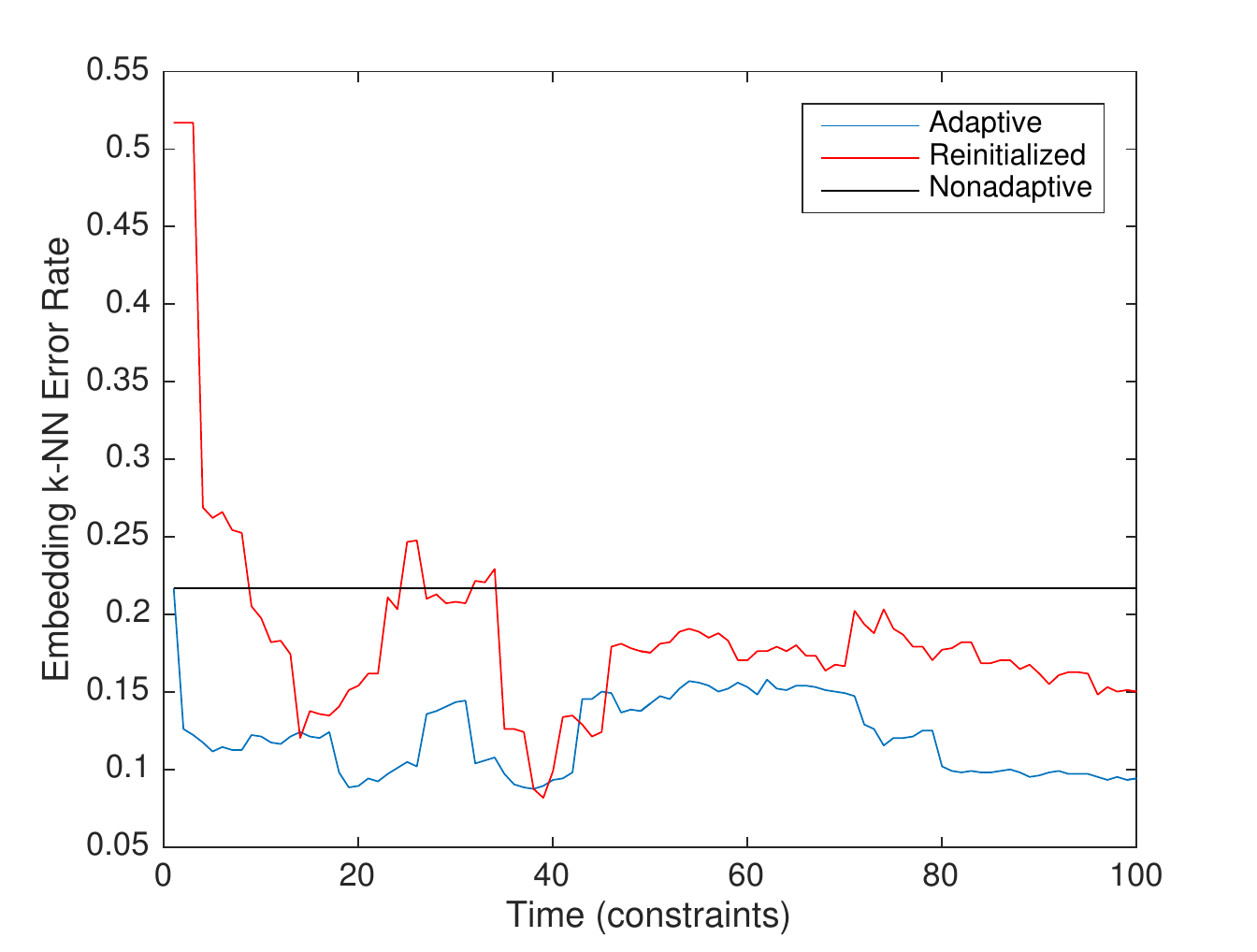}\includegraphics[width=1.7in]{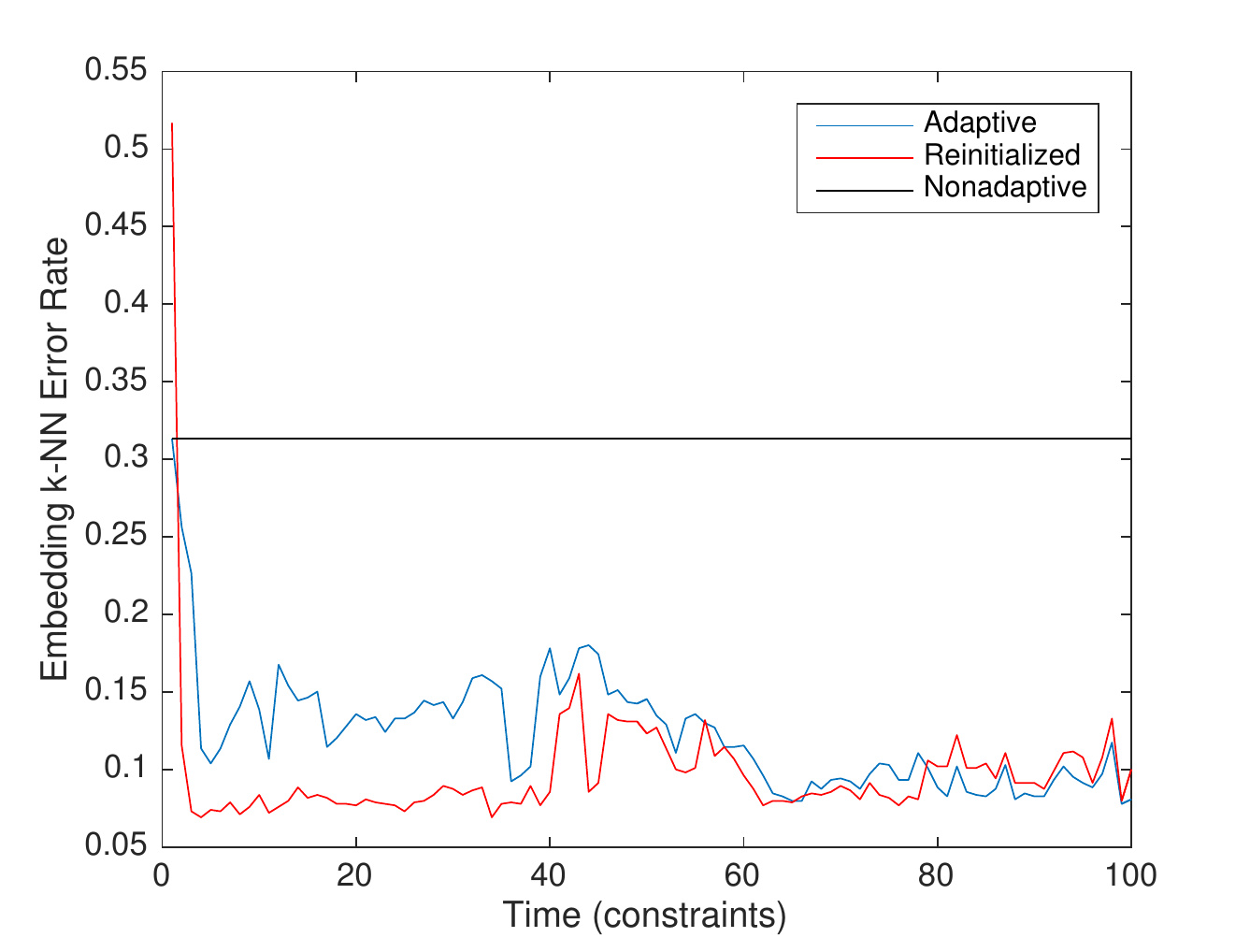}
\caption{Metric drift in Amazon review data. Left: Change from product type + sentiment clustering to simply product type; Right: Change from sentiment to product type clustering. The proposed COMID-SADL adapts to changes, tracking the clusters as they evolve. The oracle reinitialized mirror descent method (COMID) learner has higher tracking error and the nonadaptive learner (straight line) does not track the changes at all. }
\label{Fig:RealChangeBoth}
\end{figure}


\section{Conclusion and Future Work}\label{sec:conclusion}

Learning a metric on a complex dataset enables both unsupervised methods and/or a user to home in on the problem of interest while de-emphasizing extraneous information. When the problem of interest or the data distribution is nonstationary, however, the optimal metric can be time-varying. We considered the problem of tracking a nonstationary metric and presented an efficient, strongly adaptive online algorithm, called COMID-SADL, that has strong theoretical regret guarantees. Performance of our algorithm was evaluated both on synthetic and real datasets, demonstrating its ability to learn and adapt quickly in the presence of changes both in the clustering of interest and in the underlying data distribution. 

Potential directions for future work include the learning of more expressive metrics beyond the Mahalanobis metric, the incorporation of unlabeled data points in a semi-supervised learning framework \citep{bilenko2004integrating}, and the incorporation of an active learning framework to select which pairs of data points to obtain labels for at any given time \citep{settles2012active}.

\section{Acknowledgments}
The Lincoln Laboratory portion of this work was sponsored by the Assistant Secretary of Defense for Research and Engineering under Air Force Contract \#FA8721-05-C-0002. Opinions, interpretations, conclusions and recommendations are those of the author and are not necessarily endorsed by the United States Government.

\section{Strongly Adaptive Dynamic Regret}



We will prove the following general theorem giving strongly adaptive dynamic regret bounds.
\begin{theorem}
\label{Thm:SAOL}
Let $\mathbf{w} = \{\theta_1, \dots, \theta_T\}$ be an arbitrary sequence of parameters and define $\gamma_\mathbf{w}(I) = \sum_{q \leq t < s} \|\theta_{t+1} - \theta_t\|$ as a function of $\mathbf{w}$ and an interval $I = [q,s]$. Choose a set of learners $\mathcal{B}$ such that given an interval $I$ the learner $\mathcal{B}_I$ satisfies
\begin{equation}
\label{Eq:AlgCond}
R_{\mathcal{B}_I,\mathbf{w}}(T) \leq C (1 + \gamma_{\mathbf{w}}(I)) \sqrt{|I|}
\end{equation}
for some constant $C > 0$. Then the strongly adaptive dynamic learner ${SADL}^\mathcal{B}$ (COMID-SADL) using $\mathcal{B}$ as the interval learners satisfies
\begin{align}
\label{Eq:saRegret}
R_{SADL^{\mathcal{B}},\mathbf{w}}(I) \leq 8C (1 + \gamma(I))\sqrt{| I|} + 40 \log (s + 1) \sqrt{|I|}
\end{align}
on every interval $I = [q,s] \subseteq [0,T]$.
\end{theorem}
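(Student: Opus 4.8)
\emph{Proof approach.} The plan is to instantiate the Strongly Adaptive Online Learning template of \cite{daniely2015strongly}, but with the static-regret guarantee of its base learners replaced by the dynamic-regret guarantee \eqref{Eq:AlgCond}. Throughout, the operative per-round loss is the bounded surrogate $\ell_{t,c}\in[0,1]$ of \eqref{Eq:log} that drives the weight updates \eqref{eq:estreg}, and \eqref{Eq:AlgCond} is read with respect to this loss (for the COMID learners, this is the Hall--Willett-style dynamic regret bound of \cite{hall2015online} together with the warm start supplied by the backdated initialization); all regret quantities below are understood in expectation over the algorithm's random choices $\hat I_t$, as in Theorem~\ref{Thm:SADML}. \textbf{Step 1 (dyadic decomposition).} I would first record the classical fact that any $I=[q,s]$ is a disjoint union of a family $\{I_1,\dots,I_m\}\subseteq\mathcal I$ containing at most two intervals of each dyadic length; summing the resulting geometric series gives $\sum_k\sqrt{|I_k|}\le\frac{2\sqrt2}{\sqrt2-1}\sqrt{|I|}\le 8\sqrt{|I|}$. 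Crucially, $\gamma_{\mathbf w}$ is additive over this partition, $\sum_k\gamma_{\mathbf w}(I_k)=\gamma_{\mathbf w}(I)$, since the consecutive-difference sums defining $\gamma_{\mathbf w}$ telescope across the $I_k$.

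\textbf{Step 2 (single dyadic interval).} For $J\in\mathcal I$ I would split the regret as
\[
R_{SADL^{\mathcal B},\mathbf w}(J)=\left(\sum_{t\in J}\ell_{t,c}^{\,SADL}-\sum_{t\in J}\ell_{t,c}(\mathbf M_t(J),\mu_t(J))\right)+R_{\mathcal B_J,\mathbf w}(J),
\]
the comparator cancelling in the subtraction, and bound the second term by $C(1+\gamma_{\mathbf w}(J))\sqrt{|J|}$ via \eqref{Eq:AlgCond}. For the first term (the meta-regret against expert $J$) I would run the multiplicative-weights potential argument on $\Phi_t=\log\sum_I w_t(I)$, using: $|r_t(I)|\le 1$ because $\ell_{t,c}\in[0,1]$; $\eta_I=\min\{1/2,|I|^{-1/2}\}$; the bound $\log(1+x)\ge x-x^2$ for $|x|\le 1/2$; the identity $E[\ell_{t,c}^{\,SADL}]=\sum_{I\in\mathrm{ACTIVE}(t)}\frac{w_t(I)}{\sum_I w_t(I)}\ell_{t,c}(\mathbf M_t(I),\mu_t(I))$ coming from the selection rule \eqref{Eq:Select}; the fact that at each $t\le s$ only $\mathrm{floor}(\log_2 t)+1\le\log_2(s+1)+1$ learners are active; and the fact that the total weight injected by newly born learners (each initialized to $\eta_I$) up to time $s$ is $O(s)$. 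Telescoping $\Phi_s-\Phi_{t_{J1}}$ and rearranging yields $E[\sum_{t\in J}\ell_{t,c}^{\,SADL}-\sum_{t\in J}\ell_{t,c}(\mathbf M_t(J),\mu_t(J))]\le c_0\log(s+1)\sqrt{|J|}$ with $c_0=5$; this is essentially Theorem~2 of \cite{daniely2015strongly} reproduced in the present notation.

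\textbf{Step 3 (assemble).} Summing Step~2 over the partition of Step~1,
\[
E[R_{SADL^{\mathcal B},\mathbf w}(I)]=\sum_k E[R_{SADL^{\mathcal B},\mathbf w}(I_k)]\le\sum_k\left(C(1+\gamma_{\mathbf w}(I_k))+c_0\log(s+1)\right)\sqrt{|I_k|}.
\]
Pulling $\sqrt{|I_k|}\le\sqrt{|I|}$ out of the $\gamma_{\mathbf w}(I_k)$ terms, using $\sum_k\gamma_{\mathbf w}(I_k)=\gamma_{\mathbf w}(I)$ and $\sum_k\sqrt{|I_k|}\le 8\sqrt{|I|}$, gives $E[R_{SADL^{\mathcal B},\mathbf w}(I)]\le 8C(1+\gamma_{\mathbf w}(I))\sqrt{|I|}+40\log(s+1)\sqrt{|I|}$, which is \eqref{Eq:saRegret} with $40=8c_0$.

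The main obstacle is Step~2: the sleeping-experts multiplicative-weights bookkeeping. One must track the potential $\Phi_t$ through the births and deaths of the $O(\log s)$ active interval-learners (and the backdated warm starts), verify $|r_t(I)|\le 1$ so the logarithmic lower bound applies, and extract precisely the $\log(s+1)$ (rather than $\log|J|$) dependence from the cumulative injected weight — this is the only place where the constants and the $\log$ factor are genuinely pinned down. Steps~1 and~3 are routine: Step~1 is the standard ``$O(\log)$ dyadic pieces, at most two per scale'' covering, and Step~3 is a geometric series once additivity of $\gamma_{\mathbf w}$ is noted.
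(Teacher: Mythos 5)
Your proposal is correct and follows essentially the same route as the paper's proof: a multiplicative-weights potential argument bounding the meta-regret of SADL against each dyadic interval learner by $5\log(s+1)\sqrt{|I|}$ (the paper's Lemmas \ref{lem1}--\ref{lem2}), combined with the base learners' dynamic regret bound \eqref{Eq:AlgCond} and the geometric dyadic covering of an arbitrary interval (the paper's Lemma \ref{lem5}) to obtain the constants $8C$ and $40$. The only cosmetic difference is that you invoke additivity of $\gamma_{\mathbf w}$ over the partition (really a ``$\leq$'' because boundary differences are dropped) where the paper simply uses $\gamma_{\mathbf w}(I_i)\leq\gamma_{\mathbf w}(I)$; both give the same bound.
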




The proof techniques are similar to those found in \cite{daniely2015strongly,blum2005external} which are in turn similar to the analysis of the Multiplicative Weights Update (MW) method. 

Define
\begin{align}
\tilde{w}_t =& \left\{\begin{array}{ll} 0 & t < q\\ 1 & t = q \\ \tilde{w}_{t-1}(I)(1+\eta_Ir_{t-1}(I)) & q < t \leq s + 1\\\tilde{w}_s(I) & t > s+1\end{array} \right. \\\nonumber
\tilde{W}_t =& \sum_{I \in \mathcal{I}} \tilde{w}_{t+1}(I).
\end{align}
Note that $w_t(I) = \eta_I I(t)\tilde{w}_t(I)$ where $I(t)$ is the indicator function for $I$.

We first prove a pair of lemmas.
\begin{lemma}
\label{lem1}
\[
E[ \tilde{W}_t] \leq t(\log(t) + 1)
\]
for all $t \geq 1$. 
\end{lemma}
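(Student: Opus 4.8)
\textbf{Proof plan for Lemma~\ref{lem1}.} The plan is to bound $E[\tilde W_t]$ by controlling how much the total weight can grow at each step, using the multiplicative-weights structure of the update $\tilde w_{t+1}(I) = \tilde w_t(I)(1+\eta_I r_t(I))$. First I would record the key ``self-normalizing'' property of the estimated regret: by its definition in \eqref{eq:estreg}, $r_t(I)$ is the weighted average loss across active learners minus learner $I$'s own loss, so $\sum_{I \in \mathrm{ACTIVE}(t)} w_t(I)\, r_t(I) = 0$. Since $w_t(I) = \eta_I I(t)\tilde w_t(I)$, this is equivalent to $\sum_{I} \eta_I \tilde w_t(I)\, r_t(I) = 0$ where the sum effectively runs over active intervals (inactive ones contribute $r_t = 0$ or are frozen). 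This is the crucial cancellation that prevents the total weight from growing on average.

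Next I would expand the increment in total weight. Writing $\tilde W_t - \tilde W_{t-1} = \sum_I (\tilde w_{t+1}(I) - \tilde w_t(I))$ and using the update on the relevant range of $t$, each term is $\tilde w_t(I)\,\eta_I r_t(I)$, except that new intervals get ``born'' with weight $1$ at their start time. Summing, the $\sum_I \eta_I \tilde w_t(I) r_t(I)$ piece vanishes by the self-normalization above, leaving only the contribution from newly activated intervals at time $t$. Since the interval set $\mathcal I$ is a dyadic cover of the time axis, the number of intervals whose left endpoint equals $t$ is at most $\lfloor \log_2 t\rfloor + 1 \le \log t + 1$ (one per active scale), and each contributes initial weight $1$. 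Hence $\tilde W_t \le \tilde W_{t-1} + (\log t + 1)$, and since $\tilde W_0 = 0$, telescoping gives $\tilde W_t \le \sum_{\tau=1}^t (\log \tau + 1) \le t(\log t + 1)$. Taking expectations (the bound is in fact deterministic once one notes $r_t(I)$ depends on the realized losses, but the weights $\tilde w_t$ are deterministic functions of the loss sequence; if the losses themselves are random the inequality holds pointwise and survives the expectation) yields the claim.

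The main subtlety I anticipate is handling the boundary bookkeeping in the definition of $\tilde w_t$: the four-case definition means that for $t$ just below $q$, at $t=q$, inside $(q,s+1]$, and beyond $s+1$ the recursion behaves differently, and one must check that the telescoping argument correctly accounts for intervals that start, are running, or have been frozen. In particular I need to verify that a frozen interval ($t > s+1$) contributes zero increment, and that the ``$+1$ at birth'' is the only source of net increase. A second point to be careful about is that the cancellation $\sum_I \eta_I \tilde w_t(I) r_t(I) = 0$ uses the \emph{normalized} weights in \eqref{eq:estreg} restricted to $\mathrm{ACTIVE}(t)$; I should confirm that multiplying through by $\sum_{I\in\mathrm{ACTIVE}(t)} w_t(I)$ turns the normalized-average identity into the unnormalized one needed here, and that intervals not in $\mathrm{ACTIVE}(t)$ drop out cleanly. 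Once these accounting details are pinned down the estimate is immediate.
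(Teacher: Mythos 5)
Your proposal is correct and follows essentially the same route as the paper: bound the per-step growth of $\tilde W_t$ by the number of newly born intervals (at most $\lfloor\log t\rfloor+1$, each with initial weight $1$) after observing that the self-normalization $\sum_I w_t(I)\,r_t(I)=0$ kills the multiplicative-update contribution, then telescope/induct to get $t(\log t+1)$. Your observation that the cancellation is actually pointwise (since the first term of $r_t(I)$ in \eqref{eq:estreg} is exactly the $p_t$-weighted average loss) is a mild sharpening of the paper's argument, which phrases the same cancellation as an expectation over the randomly selected output, but it is not a different proof.
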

\begin{proof}
For all $t \geq 1$, $|\{[q,s] \in \mathcal{I}: q = t\}| \leq \lfloor \log(t)\rfloor + 1$. Thus
\begin{align*}
\tilde{W}_{t+1} =& \sum_{I = [q,s] \in \mathcal{I}} \tilde{w}_{t+1}(I)\\
=& \sum_{I = [t+1,s] \in \mathcal{I}} \tilde{w}_{t+1}(I) + \sum_{I=[q,s]\in \mathcal{I}: q \leq t} \tilde{w}_{t+1}(I)\\
\leq& \log(t+1) + 1 + \sum_{I = [q,s] \in \mathcal{I}:q \leq t} \tilde{w}_{t+1}(I).
\end{align*}
Then
\begin{align*}
 \sum_{I = [q,s] \in \mathcal{I}:q \leq t} \tilde{w}_{t+1}(I) =& \sum_{I = [q,s] \in \mathcal{I}:q \leq t} \tilde{w}_{t}(I)(1 + \eta_I I(t) r_t(I) )\\
=& \tilde{W}_t + \sum_{I \in \mathcal{I}} w_t(I) r_t(I).
\end{align*}
Suppose that $E[\tilde{W}_t] \leq t(\log(t) + 1)$. Furthermore, note that
\begin{align*}
E\left[\sum_{I \in \mathcal{I}} w_t(I) r_t(I)\right] &= W_t \sum_{I \in \mathcal{I}} p_t(I)(E [\ell_t(x_t)] - \ell_t(x_t(I)))\\
&= E[\ell_t(x_t)] - \sum_{I \in \mathcal{I}} p_t(I)\ell_t(x_t(I)))\\
&=E[\ell_t(x_t)] - E[\ell_t(x_t)]\\
&= 0.
\end{align*}
since $x_t = x_t(I)$ with probability $p_t(I)$.
Thus
\begin{align*}
E[\tilde{W}_{t+1}] &\leq t(\log(t) + 1) + \log(t+1) + 1 + E\left[\sum_{I \in \mathcal{I}} w_t(I) r_t(I)\right]\\
&\leq (t+1) (\log(t+1) + 1).
\end{align*}
Since $E[\tilde{W}_1] = \tilde{W}_1 = \tilde{w}([1,1]) = 1$, the lemma follows by induction.

\end{proof}

\begin{lemma}
\label{lem2}
\[
E\sum_{t=q}^s r_t(I) \leq 5 \log(s+1)\sqrt{|I|},
\]
for every $I = [q,s] \in \mathcal{I}$.
\end{lemma}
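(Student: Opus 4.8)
The plan is to run the standard multiplicative‑weights telescoping argument on the unnormalized weight $\tilde w_{s+1}(I)$ of the single fixed interval $I=[q,s]$, and then convert the \emph{aggregate} weight bound of Lemma \ref{lem1} into a bound on that one weight. Unrolling the recursion that defines $\tilde w_t$ gives the product form $\tilde w_{s+1}(I)=\prod_{t=q}^{s}\bigl(1+\eta_I r_t(I)\bigr)$. Since $\ell_{t,c}=\tfrac{1}{c}\min\{c,\ell_t\}$ with $\ell_t\ge0$, every $\ell_{t,c}$ lies in $[0,1]$, so the estimated regret $r_t(I)$ in \eqref{eq:estreg} is the difference of a convex combination of values in $[0,1]$ and one such value and therefore satisfies $r_t(I)\in[-1,1]$; with $\eta_I\le\tfrac{1}{2}$ this gives $\eta_I r_t(I)\in[-\tfrac{1}{2},\tfrac{1}{2}]$. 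Taking logarithms, using the scalar bound $\log(1+x)\ge x-x^2$ (valid on $[-\tfrac{1}{2},\infty)$), and bounding $r_t(I)^2\le1$ with $|I|=s-q+1$ terms,
\[
\log\tilde w_{s+1}(I)\ \ge\ \eta_I\sum_{t=q}^{s}r_t(I)-\eta_I^2|I|,\qquad\text{so}\qquad \sum_{t=q}^{s}r_t(I)\ \le\ \frac{\log\tilde w_{s+1}(I)}{\eta_I}+\eta_I|I|.
\]

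Next I pass to expectations and invoke Lemma \ref{lem1}. Because all the weights are nonnegative, $\tilde w_{s+1}(I)\le\sum_{J\in\mathcal I}\tilde w_{s+1}(J)=\tilde W_s$, and $\tilde w_{s+1}(I)\ge2^{-|I|}>0$ deterministically, so concavity of $\log$ (Jensen) together with monotonicity and Lemma \ref{lem1} give
\[
E\bigl[\log\tilde w_{s+1}(I)\bigr]\ \le\ \log E[\tilde W_s]\ \le\ \log\bigl(s(\log s+1)\bigr)\ \le\ \tfrac{3}{2}\log(s+1),
\]
the last inequality being the elementary estimate $s(\log s+1)\le(s+1)^{3/2}$, valid for all $s\ge1$. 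Hence $E\sum_{t=q}^{s}r_t(I)\le\tfrac{3}{2}\eta_I^{-1}\log(s+1)+\eta_I|I|$.

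It remains to substitute $\eta_I=\min\{\tfrac{1}{2},1/\sqrt{|I|}\}$. If $|I|\ge4$ then $\eta_I=1/\sqrt{|I|}$ and the right‑hand side equals $\tfrac{3}{2}\log(s+1)\sqrt{|I|}+\sqrt{|I|}\le\tfrac{5}{2}\log(s+1)\sqrt{|I|}$, because $s\ge|I|\ge4$ forces $\log(s+1)\ge1$. If $|I|\le4$ then $\eta_I=\tfrac{1}{2}$ and the right‑hand side equals $3\log(s+1)+\tfrac{|I|}{2}\le3\log(s+1)+2\le5\log(s+1)\sqrt{|I|}$ whenever $s\ge2$ (using $\log(s+1)\ge1$ and $\sqrt{|I|}\ge1$), and the one leftover case $s=q=|I|=1$ gives $E\sum_{t=q}^{s}r_t(I)\le3\log2+\tfrac{1}{2}\le5\log2$. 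In all cases $E\sum_{t=q}^{s}r_t(I)\le5\log(s+1)\sqrt{|I|}$, as claimed.

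The step I expect to be the real obstacle is the middle one: $\log\prod_t(1+\eta_I r_t(I))$ cannot be controlled on its own, since the estimated regrets may be positive and each factor can exceed $1$; the only available leverage is that the \emph{total} weight $\tilde W_s$ summed over all intervals stays poly‑logarithmically small (Lemma \ref{lem1}), and this must be transferred to a single interval by nonnegativity and then commuted past the logarithm by Jensen's inequality. Everything else — the scalar inequalities $\log(1+x)\ge x-x^2$ and $s(\log s+1)\le(s+1)^{3/2}$, and tracking the numerical constant — is routine.
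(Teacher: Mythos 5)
Your proof is correct and follows essentially the same route as the paper's: unroll $\tilde w_{s+1}(I)$ into a product, lower-bound its logarithm via $\log(1+x)\ge x-x^2$, upper-bound its expectation through Lemma \ref{lem1} and Jensen's inequality, and then substitute $\eta_I=\min\{1/2,|I|^{-1/2}\}$. The only differences are cosmetic (you use $s(\log s+1)\le(s+1)^{3/2}$ where the paper uses $\log(\log(s+1)+1)\le\log(s+1)$, and you make explicit the case analysis on $\eta_I$ and the boundedness $r_t(I)\in[-1,1]$ that the paper leaves implicit).
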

\begin{proof}
Fix $I = [q,s] \in \mathcal{I}$. Recall that
\[
\tilde{w}_{s+1}(I) = \prod_{t=q}^s (1 + \eta_I I(t) r_t(I)) = \prod_{t=q}^s ( 1+ \eta_I r_t(I)).
\]
Since $\eta_I \in (0,1/2)$ and $\log(1 + x) \geq (x - x^2)$ for all $x \geq -1/2$,
\begin{align}
\label{eq:reglem2}
\log(\tilde{w}_{s+1}(I)) &= \sum_{t=q}^s \log(1 + \eta_I r_t(I))\\\nonumber
&\geq \sum_{t=q}^s \eta_I r_t(I) - \sum_{t=q}^s (\eta_I r_t(I))^2\\\nonumber
&\geq \eta_I\left( \sum_{t=q}^s r_t(I) - \eta_I |I|\right).
\end{align}
By Lemma \ref{lem1} we have
\[
E[\tilde{w}_{s+1}(I)] \leq E[\tilde{W}_{s+1}] \leq (s+1)(\log(s+1) + 1),
\]
so
\[
E[\log(\tilde{w}_{s+1}(I))] \leq \log(E[ \tilde{w}_{s+1}]) \leq \log(s+1) + \log(\log(s+1)+1).
\]
Combining with the expectation of \eqref{eq:reglem2} and dividing by $\eta_I$,
\begin{align*}
E\left[ \sum_{t=q}^s r_t (I)\right] &\leq \eta_I | I | + \frac{1}{\eta_I} (\log(s + 1) + \log(\log(s+1) + 1))\\
&\leq \eta_I | I | + 2 \eta_I^{-1} \log(s+ 1)\\
& = 5 \log(s+1) \sqrt{|I|}.
\end{align*}
since $x \geq \log(1 + x)$ and $\eta_I = \min\{1/2,|I|^{-1/2}\}$.

\end{proof}

Define the restriction of $\mathcal{I}$ to an interval $J \subseteq \mathbb{N}$ as $\mathcal{I}|_{J} = \{I \in \mathcal{I}: I \subseteq J\}$. Note the following lemma from \cite{daniely2015strongly}:
\begin{lemma}
\label{lem5}
Consider the arbitrary interval $I = [q,s] \subseteq \mathbb{N}$. Then, the interval $I$ can be partitioned into two finite sequences of disjoint and consecutive intervals, given by $(I_{-k}, \dots, I_0) \subseteq \mathcal{I}|_I$ and $(I_1, I_2, \dots, I_p) \subseteq \mathcal{I}|_I$, such that
\begin{align*}
\begin{array}{ll}
|I_{-i}|/|I_{-i+1}| \leq 1/2, & \forall i \geq 1,\\
|I_i|/|I_{i-1}| \leq 1/2, & \forall i \geq 2.
\end{array}
\end{align*} 
\end{lemma}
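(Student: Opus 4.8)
\textbf{Proof proposal for Lemma~\ref{lem5}.}

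The plan is to construct the two sequences greedily, working outward from a well-chosen ``central'' interval of the dyadic family $\mathcal{I}$ contained in $I=[q,s]$, and then to verify the geometric-decay ratios by exploiting the nested structure of the dyadic partition. First I would recall the key structural fact about $\mathcal{I}$: the intervals of a fixed length $|I_0|2^j$ tile the time axis, and each such interval is contained in exactly one interval of the next coarser scale $j+1$; conversely each coarse interval splits into two consecutive finer ones. So the dyadic intervals contained in $[q,s]$ form a forest under containment, and at each scale they occupy a contiguous block of the axis.

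The main construction is as follows. Among all dyadic intervals $I' \in \mathcal{I}|_I$, pick one of maximal length, say $I_0$, that is contained in $[q,s]$; this is the coarsest scale that ``fits.'' This $I_0$ partitions $I$ into a left remainder $[q, t_{I_0 1}-1]$ and a right remainder $[t_{I_0 2}+1, s]$, each strictly shorter than $|I_0|$. I would then fill the right remainder by the forward sequence $(I_1,I_2,\dots,I_p)$: repeatedly choose the longest dyadic interval in $\mathcal{I}$ that fits inside the currently unfilled right remainder and starts at its left endpoint. Symmetrically, I would fill the left remainder by $(I_{-1},\dots,I_{-k})$, each time choosing the longest dyadic interval that fits and ends at the current right endpoint of the unfilled left remainder. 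Because the remaining length strictly decreases and the $|I|$ are powers of two, this greedy procedure terminates and yields disjoint consecutive intervals covering $I$.

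The heart of the argument is verifying the ratio bounds $|I_i|/|I_{i-1}| \le 1/2$ for $i \ge 2$ (and symmetrically on the left). The key observation is that when the greedy step selects $I_i$ as the longest fitting dyadic interval starting at a given point, the previously placed interval $I_{i-1}$ had length at least $2|I_i|$: if some dyadic interval of length $\ge |I_{i-1}|$ could fit starting at $I_i$'s left endpoint, it would have been at least as long as $|I_i|$ yet no longer than what remained, so the remaining length after placing $I_{i-1}$ must be strictly less than $|I_{i-1}|$, forcing the next power-of-two choice down by at least a factor of two. One must also handle the junction $I_0$ to $I_1$ (and $I_0$ to $I_{-1}$) separately, which is exactly why the ratio hypotheses start at $i \ge 2$ on the right and hold for all $i \ge 1$ on the left; the asymmetry in the indexing reflects that $I_0$ is the maximal interval and the two remainders are both shorter than $|I_0|$ but are otherwise unconstrained relative to one another.

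I expect the main obstacle to be the careful bookkeeping that the greedy ``longest fitting dyadic interval anchored at the current endpoint'' is actually well defined and consecutive, i.e. that alignment of dyadic intervals does not obstruct placement. Concretely, one must argue that a dyadic interval of length $2^j$ anchored at an arbitrary integer endpoint need not align with the fixed dyadic grid, and reconcile this with the tiling structure of $\mathcal{I}$; the cleanest route is to prove by strong induction on $|I|$ that any interval admits such a two-sided geometric decomposition, using the maximal interval $I_0$ to reduce to the two strictly shorter remainders. Since this statement is quoted from \cite{daniely2015strongly}, I would also note that it suffices to invoke their result, but the inductive construction above gives a self-contained derivation.
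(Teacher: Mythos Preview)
The paper does not actually prove this lemma; it simply quotes it from \cite{daniely2015strongly}. Your greedy construction outward from a maximal dyadic subinterval is the standard argument used there, so the overall strategy is right and matches the source you would be reproducing.

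There is, however, a genuine slip in your justification. The claim that ``each [remainder is] strictly shorter than $|I_0|$'' is false in general, and it leads you to the wrong explanation of the indexing asymmetry. Take $I=[3,11]$ under the paper's convention (length-$2^j$ intervals start at $t=2^j$): the maximal dyadic subintervals are $[4,7]$ and $[8,11]$, both of length $4$, and whichever you choose as $I_0$, one remainder has length $\ge |I_0|$. The correct picture is that $I$ contains \emph{at most two} maximal dyadic subintervals, necessarily adjacent (three consecutive level-$j$ intervals would force a level-$(j+1)$ interval inside $I$). Taking the leftmost as $I_0$ makes the left remainder strictly shorter than $|I_0|$, but the right remainder may still contain the second maximal interval, which then becomes $I_1$ with $|I_1|=|I_0|$. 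This is the real reason the right-side ratio condition starts only at $i\ge 2$; your stated explanation (``the two remainders are both shorter than $|I_0|$ but are otherwise unconstrained relative to one another'') would, if true, already force $|I_1|\le |I_0|/2$ and make the asymmetry unnecessary. Once you allow for this two-peak structure, the alignment bookkeeping you outline does go through: after $I_1$ the remaining right length is strictly less than $|I_1|$ (otherwise a third maximal interval or a strictly longer one would fit), and the geometric decay follows by the induction you sketch.
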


This enables us to extend the bounds to every arbitrary interval $I = [q,s] \subseteq [T]$ and thus complete the proof.

Let $I ={ \bigcup}_{i=-k}^p I_i$ be the partition described in Lemma \ref{lem5}. Then
\begin{align}
\label{eq:regThm}
R_{SADL^{\mathcal{B}},\mathbf{w}}(I) \leq \sum_{i \leq 0} R_{SADL^{\mathcal{B}},\mathbf{w}}(I_i) + \sum_{i \geq 1} R_{SADL^{\mathcal{B}},\mathbf{w}}(I_i).
\end{align}
By Lemma \ref{lem2} and \eqref{Eq:AlgCond},
\begin{align*}
\sum_{i \leq 0} R_{SADL^{\mathcal{B}},\mathbf{w}}(I_i) &\leq C \sum_{i\leq 0} (1 + \gamma_{\mathbf{w}}(I_i))\sqrt{|I_i|} + 5 \sum_{i \leq 0} \log(s_i + 1) \sqrt{I_i}\\
&\leq (C (1 + \gamma(I)) + 5\log(s_i + 1) )\sum_{i \leq 0}  \sqrt{I_i},
\end{align*}
since $\gamma_{\mathbf{w}}(I_i) \leq \gamma_{\mathbf{w}}(I)$ by definition.
By Lemma \ref{lem5},
\begin{align*}
\sum_{i \leq 0} \sqrt{|I_i|} \leq \frac{\sqrt{2}}{\sqrt{2} - 1}\sqrt{ |I|} \leq 4\sqrt{|I|}.
\end{align*}
This bounds the first term of the right hand side of Equation \eqref{eq:regThm}. The bound for the second term can be found in the same way. Thus, 
\[
R_{SADL^{\mathcal{B}},\mathbf{w}}(I) \leq (8C (1 + \gamma(I))\sqrt{| I|} + 40 \log (s + 1) \sqrt{|I|}.
\]
Since this holds for all $I$, this completes the proof.

\section{Online DML Dynamic Regret}

In this section, we derive the dynamic regret of the COMID metric learning algorithm. Recall that the COMID algorithm is given by
\begin{align}
\label{Eq:COMID}
\hat{\mathbf M}_{t+1} =& \arg \min_{\mathbf M \succeq 0} B_\psi(\mathbf M,\hat{\mathbf M}_t) \\\nonumber &+ \eta_t \langle \nabla_M \ell_t(\hat{\mathbf M}_t,\mu_t), \mathbf M-\hat{\mathbf M}_t\rangle + \eta_t \rho \|\mathbf M\|_*\\\nonumber
\hat{\mu}_{t+1} =& \arg \min_{\mu \geq 1} B_\psi(\mu,\hat{\mu}_t) + \eta_t \nabla_\mu \ell_t(\hat{\mathbf M}_t, \hat{\mu}_t)'(\mu - \hat{\mu}_t),
\end{align}
where $B_\psi$ is any Bregman divergence and $\eta_t$ is the learning rate parameter. From \cite{hall2015online} we have:


\begin{theorem}
\begin{align*}
G_\ell &= \max_{\theta \in \Theta,\ell \in \mathcal{L}} \|\nabla f(\theta) \| \\
\phi_{max} &= \frac{1}{2} \max_{\theta \in \Theta} \| \nabla \psi (\theta)\| \\
D_{max} &= \max_{\theta,\theta' \in \Theta} B_\psi(\theta' \| \theta) 
\end{align*}

Let the sequence $\hat{\mathbf{\theta}}_t = [\hat{\mathbf{M}}_t, \hat{\mu}_t]$, $t = 1,\cdots, T$ be generated via the COMID algorithm, and let $\mathbf w$ be an arbitrary sequence in $\mathcal{W}=\{\mathbf w | \sum_{t = 0}^{T-1} \|\theta_{t+1} - \theta_t\| \leq \gamma \}$. Then using $\eta_{t+1} \leq \eta_t$ gives a dynamic regret
\begin{equation}
R_\mathbf{w}([0, T]) \leq \frac{D_{max}}{\eta_{T+1}} + \frac{4\phi_{max}}{\eta_T} \gamma + \frac{G_\ell^2}{2\sigma} \sum_{t=1}^T \eta_t
\end{equation}

\end{theorem}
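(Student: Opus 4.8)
The plan is to run the standard composite-objective mirror descent argument, but tracking a \emph{moving} comparator, following \cite{hall2015online}. Write $\theta = [\mathbf{M},\mu]$ and split $f_t = \ell_t + r$ into its smooth part $\ell_t$ and the composite part $r = \rho\|\cdot\|_*$ (nonnegative and convex). The starting point is a per-round COMID inequality derived from the first-order optimality condition of the update \eqref{Eq:COMID} for $\hat\theta_{t+1}$ over the feasible set: combine that variational inequality with the Bregman three-point identity $\langle\nabla\psi(\hat\theta_{t+1}) - \nabla\psi(\hat\theta_t),\theta - \hat\theta_{t+1}\rangle = B_\psi(\theta,\hat\theta_t) - B_\psi(\theta,\hat\theta_{t+1}) - B_\psi(\hat\theta_{t+1},\hat\theta_t)$, convexity of $\ell_t$ and $r$ (to replace inner products by function-value gaps), and $\sigma$-strong convexity of $\psi$ (to dominate $\eta_t\langle\nabla\ell_t(\hat\theta_t),\hat\theta_t-\hat\theta_{t+1}\rangle - B_\psi(\hat\theta_{t+1},\hat\theta_t)$ by $\tfrac{\eta_t^2}{2\sigma}\|\nabla\ell_t(\hat\theta_t)\|_*^2 \le \tfrac{\eta_t^2 G_\ell^2}{2\sigma}$ via Young's inequality). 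Dividing by $\eta_t$ and taking the comparator to be $\theta_t$ gives
\begin{equation*}
\ell_t(\hat\theta_t) - \ell_t(\theta_t) + r(\hat\theta_{t+1}) - r(\theta_t) \le \frac{1}{\eta_t}\big(B_\psi(\theta_t,\hat\theta_t) - B_\psi(\theta_t,\hat\theta_{t+1})\big) + \frac{\eta_t G_\ell^2}{2\sigma}.
\end{equation*}

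Next I would sum over $t=1,\dots,T$ and rewrite the left-hand side as $\sum_t(f_t(\hat\theta_t)-f_t(\theta_t))$; the discrepancy $\sum_t(r(\hat\theta_t)-r(\hat\theta_{t+1}))$ telescopes to $r(\hat\theta_1)-r(\hat\theta_{T+1})\le r(\hat\theta_1)$, which is a harmless additive constant (zero if $\hat\theta_1$ is the origin). The real work is the term $\sum_{t=1}^T \eta_t^{-1}(B_\psi(\theta_t,\hat\theta_t) - B_\psi(\theta_t,\hat\theta_{t+1}))$, which does \emph{not} telescope because both the weight $\eta_t^{-1}$ and the reference point $\theta_t$ move with $t$. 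I would reindex the negative half ($t\mapsto t-1$) and regroup:
\begin{align*}
\sum_{t=1}^T \frac{B_\psi(\theta_t,\hat\theta_t) - B_\psi(\theta_t,\hat\theta_{t+1})}{\eta_t} &= \frac{B_\psi(\theta_1,\hat\theta_1)}{\eta_1} - \frac{B_\psi(\theta_T,\hat\theta_{T+1})}{\eta_T} \\
&\quad + \sum_{t=2}^T\Big(\tfrac{1}{\eta_t}-\tfrac{1}{\eta_{t-1}}\Big)B_\psi(\theta_t,\hat\theta_t) + \sum_{t=2}^T \frac{B_\psi(\theta_t,\hat\theta_t) - B_\psi(\theta_{t-1},\hat\theta_t)}{\eta_{t-1}}.
\end{align*}

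For the first line and the second sum, bound every Bregman divergence by $D_{max}$; since $\eta_t$ is nonincreasing the increments $\tfrac{1}{\eta_t}-\tfrac{1}{\eta_{t-1}}$ are nonnegative and telescope, so these pieces contribute at most $D_{max}/\eta_{T+1}$ (keeping the nonpositive $-B_\psi(\theta_T,\hat\theta_{T+1})/\eta_T$ term tightens $\eta_T$ to $\eta_{T+1}$). For the drift sum, the key estimate is that $B_\psi(\cdot,y)$ has gradient $\nabla\psi(\cdot)-\nabla\psi(y)$, so by the mean value theorem applied to $\psi$ together with the definition of $\phi_{max}$, $|B_\psi(\theta_t,\hat\theta_t) - B_\psi(\theta_{t-1},\hat\theta_t)| \le 4\phi_{max}\|\theta_t-\theta_{t-1}\|$; using $\eta_{t-1}^{-1}\le\eta_T^{-1}$ and $\sum_t\|\theta_t-\theta_{t-1}\|\le\gamma$ this contributes at most $4\phi_{max}\gamma/\eta_T$. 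Adding the remaining $\tfrac{G_\ell^2}{2\sigma}\sum_t\eta_t$ from the linearization errors yields exactly the claimed bound. I expect the drift step to be the main obstacle: one must avoid bounding the non-telescoping sum term-by-term (which would produce a vacuous $\Theta(T/\eta_T)$), and instead reindex so that the residual is controlled by the comparator \emph{variation} $\gamma$ through the gradient bound $\phi_{max}$ rather than by $D_{max}$ per round.
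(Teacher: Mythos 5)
The paper does not actually prove this theorem---it is imported from \cite{hall2015online} with the single line ``From \cite{hall2015online} we have''---and your argument is a correct reconstruction of that reference's dynamic mirror descent analysis: the per-round composite inequality via the three-point identity and strong convexity, the reindexing of the non-telescoping Bregman sum into a telescoping $D_{max}$ part plus a drift part, and the estimate $|B_\psi(\theta_t,\hat\theta_t)-B_\psi(\theta_{t-1},\hat\theta_t)|\le 4\phi_{max}\|\theta_t-\theta_{t-1}\|$ (a difference of two $\nabla\psi$ terms, each of norm at most $2\phi_{max}$) all match the source, and the constants come out exactly as stated. The only cosmetic quibble is your remark that keeping the nonpositive $-B_\psi(\theta_T,\hat\theta_{T+1})/\eta_T$ term ``tightens'' $\eta_T$ to $\eta_{T+1}$: one simply drops that term and uses $\eta_{T+1}\le\eta_T$ to write the leading constant as $D_{max}/\eta_{T+1}$, and your handling of the residual $r(\hat\theta_1)$ (zero when $\hat{\mathbf M}_1=0$) is consistent with the theorem as stated.
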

Using a nonincreasing learning rate $\eta_t$, we can then prove a bound on the dynamic regret for a quite general set of stochastic optimization problems. 




Applying this to our problem, we have 
%
\begin{align*}
G_\ell &= \max_{\|\mathbf{M}\| \leq c,t,\mu} \|\nabla (\ell_t(\mathbf{M},\mu) + \rho \|\mathbf{M}\|_*) \|_2 \\
\phi_{max} &= \frac{1}{2} \max_{\|\mathbf{M}\| \leq c} \| \nabla \psi (\mathbf{M})\|_2 \\
D_{max} &= \max_{\|\mathbf{M}\|,\|\mathbf{M}'\| \leq c} B_\psi(\mathbf{M}' \| \mathbf{M}) 
\end{align*}
For $\ell_t(\cdot)$ being the hinge loss and $\psi= \|\cdot\|_F^2$,
\begin{align*}
G_\ell &\leq \sqrt{ (\max_t \|\mathbf x_t-\mathbf z_t\|_2^2 + \rho)^2}\\
\phi_{max} &=  c\sqrt{n} \\
D_{max} &= 2c\sqrt{n}.
\end{align*}
The other two quantities are guaranteed to exist and depend on the choice of Bregman divergence and $c$. Thus,
\begin{corollary}[Dynamic Regret: ML COMID]
\label{Cor:DynReg}

Let the sequence $\hat{\mathbf{M}}_t, \hat{\mu}_t$ be generated by \eqref{Eq:COMID}, and let $\mathbf{w} = \{\mathbf{M}_t\}_{t=1}^T$ be an arbitrary sequence with $\|\mathbf{M}_t\| \leq c$. Then using $\eta_{t+1} \leq \eta_t$ gives
\begin{equation}
R_{\mathbf{w}} \leq \frac{D_{max}}{\eta_{T+1}} + \frac{4\phi_{max}}{\eta_T} \gamma + \frac{G_\ell^2}{2\sigma} \sum_{t=1}^T \eta_t
\end{equation}
and setting $\eta_t = \eta_0/\sqrt{T}$,
\begin{align}
R_{\mathbf{w}}([0, T]) \leq& \sqrt{T}\left(\frac{D_{max} + 4 \phi_{max} ( \sum_t \|\mathbf{M}_{t+1} - \mathbf{M}_t\|_F)}{\eta_0}+ \frac{\eta_0 G_\ell^2}{2\sigma}\right)\nonumber\\ \label{Eq:DynBound}
=& O\left(\sqrt{T} \left[ 1 + \sum_{t  = 1}^T \|\mathbf{M}_{t+1} - \mathbf{M}_t\|_F\right]\right).
\end{align}

\end{corollary}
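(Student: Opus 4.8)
The plan is to obtain Corollary~\ref{Cor:DynReg} as a direct specialization of the general dynamic-regret theorem of \cite{hall2015online} quoted just above, applied to the composite metric-learning objective $f_t(\mathbf M,\mu)=\ell_t\bigl(y_t(\mu-\mathbf u_t^T\mathbf M\mathbf u_t)\bigr)+\rho\, r(\mathbf M)$ with parameter $\theta=(\mathbf M,\mu)$ ranging over the compact convex set $\Theta=\{\mathbf M\succeq 0,\ \|\mathbf M\|\le c\}\times\{\mu\ge 1\}$ and mirror map $\psi=\|\cdot\|_F^2$. The first step is to verify that the hypotheses of that theorem are met: (i) each $f_t$ is convex in $\theta$, being the hinge loss composed with the affine map $\theta\mapsto y_t(\mu-\mathbf u_t^T\mathbf M\mathbf u_t)$ plus the norm regularizer; (ii) $f_t$ is subdifferentiable, and since the COMID update \eqref{Eq:Objective}-driven step uses a subgradient of $\ell_t$ together with the proximal term for $\rho\,r(\mathbf M)$, the theorem's analysis carries over verbatim with any measurable selection of subgradients; (iii) $\psi$ is $\sigma$-strongly convex with respect to the Frobenius norm on $\Theta$; and (iv) the comparator sequence $\mathbf w=\{\mathbf M_t\}$ lies in $\Theta$, which is exactly the assumption $\|\mathbf M_t\|\le c$. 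The one point requiring care is that the theorem needs the iterates themselves to remain in the bounded set $\Theta$; this is ensured because the COMID update projects onto $\{\mathbf M\succeq 0\}$ (and, if needed, onto $\{\|\mathbf M\|\le c\}$), which is what makes $D_{max}$ and $G_\ell$ finite.

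Given the hypotheses, the theorem yields immediately
\[
R_{\mathbf w}([0,T])\le\frac{D_{max}}{\eta_{T+1}}+\frac{4\phi_{max}}{\eta_T}\,\gamma+\frac{G_\ell^2}{2\sigma}\sum_{t=1}^T\eta_t ,
\]
which is the first displayed inequality of the corollary, with $\gamma=\gamma_{\mathbf w}([0,T])=\sum_t\|\mathbf M_{t+1}-\mathbf M_t\|_F$. The second step is to bound the three problem constants, exactly as indicated in the display preceding the corollary: a subgradient of $\ell_t$ in $\mathbf M$ is either $0$ or $y_t\mathbf u_t\mathbf u_t^T$, of Frobenius norm $\|\mathbf x_t-\mathbf z_t\|_2^2$, the $\mu$-derivative is $\pm 1$, and the nuclear-norm subgradient contributes at most $\rho$ in the relevant norm, giving $G_\ell\le\max_t\|\mathbf x_t-\mathbf z_t\|_2^2+\rho$ up to an absolute constant; for $\psi=\|\cdot\|_F^2$ we have $\nabla\psi(\mathbf M)=2\mathbf M$, so $\phi_{max}=\tfrac12\max_{\|\mathbf M\|\le c}\|\nabla\psi(\mathbf M)\|=c\sqrt n$; and $B_\psi(\mathbf M'\|\mathbf M)=\|\mathbf M'-\mathbf M\|_F^2$ is bounded on $\Theta$, giving $D_{max}=2c\sqrt n$ under the normalization used. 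The key observation is that $D_{max},\phi_{max},G_\ell,\sigma$ are all independent of $T$ and of the comparator sequence.

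The final step is to substitute the constant schedule $\eta_t=\eta_0/\sqrt T$, which trivially satisfies $\eta_{t+1}\le\eta_t$; then $\eta_{T+1}=\eta_T=\eta_0/\sqrt T$ and $\sum_{t=1}^T\eta_t=\eta_0\sqrt T$, so the bound collapses to
\[
R_{\mathbf w}([0,T])\le\sqrt T\left(\frac{D_{max}+4\phi_{max}\,\gamma}{\eta_0}+\frac{\eta_0 G_\ell^2}{2\sigma}\right),
\]
which is \eqref{Eq:DynBound}, and since the bracketed quantity is $1+\gamma$ up to constants this is $O\bigl(\sqrt T\,[\,1+\sum_{t=1}^T\|\mathbf M_{t+1}-\mathbf M_t\|_F\,]\bigr)$. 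There is no deep obstacle: the real work is (a) checking that the nonsmooth pieces (hinge loss, nuclear norm) and the compactness of $\Theta$ are compatible with the hypotheses of \cite{hall2015online}, so that the generic bound applies, and (b) the bookkeeping of the constants; the only genuinely substantive modeling decision is forcing the iterates to stay in $\{\|\mathbf M\|\le c\}$, without which $D_{max}$ and $G_\ell$ need not be finite.
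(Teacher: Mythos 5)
Your proposal is correct and follows essentially the same route as the paper: invoke the dynamic-regret theorem of Hall and Willett quoted just above, bound the constants $G_\ell$, $\phi_{max}$, $D_{max}$ for the hinge loss with $\psi=\|\cdot\|_F^2$, and substitute the constant schedule $\eta_t=\eta_0/\sqrt{T}$. Your extra care about verifying the hypotheses (convexity, subdifferentiability, and keeping the iterates in the bounded set so that $D_{max}$ and $G_\ell$ are finite) is a point the paper treats only implicitly, but it does not change the argument.
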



Corollary \ref{Cor:DynReg} is a bound on the regret relative to the batch estimate of ${\mathbf{M}}_t$ that minimizes the total batch loss subject to a bounded variation $\sum_t \|\mathbf{M}_{t+1} - \mathbf{M}_t\|_F$. Also note that setting $\eta_t = \eta_0/\sqrt{t}$ gives the same bound as \eqref{Eq:DynBound}.

In other words, we pay a linear penalty on the total amount of variation in the underlying parameter sequence. From \eqref{Eq:DynBound}, it can be seen that the bound-minimizing $\eta_0$ increases with increasing $\sum_t \|\mathbf{M}_{t+1} - \mathbf{M}_t\|_F$, indicating the need for an adaptive learning rate.

For comparison, if the metric is in fact static then by standard stochastic mirror descent results \cite{hall2015online}
\begin{theorem}[Static Regret]
If $\hat{\mathbf{M}}_1 = 0$ and $\eta_t = (2\sigma D_{max})^{1/2}/(G_f \sqrt{T})$, then
\begin{equation}
R_{static}([0,T]) \leq G_f (2T D_{max}/\sigma)^{1/2}.
\end{equation}
\end{theorem}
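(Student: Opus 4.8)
The plan is to recognize this as the classical static regret bound for composite objective mirror descent (COMID), specialized to a fixed comparator with a constant step size, and then to optimize that step size. I would work with the stacked iterate $\hat\theta_t = [\hat{\mathbf M}_t, \hat\mu_t]$ and the composite per-round loss $f_t(\theta) = \ell_t(\theta) + \rho\, r(\theta)$ with $r(\mathbf M) = \|\mathbf M\|_*$ (acting trivially on the $\mu$ block), treating $B_\psi$ as $\sigma$-strongly convex with respect to the norm in which the (sub)gradients of $\ell_t$ are measured; for $\psi = \|\cdot\|_F^2$ one has $\sigma = 2$, and $G_f$ bounds $\|\nabla f_t\|$ uniformly.

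\textbf{Step 1: per-round inequality.} Starting from the first-order optimality condition of the COMID update \eqref{Eq:COMID}, I would derive the standard one-step bound of \cite{duchi2010composite}: for any feasible fixed comparator $\theta^\star = [\mathbf M^\star, \mu^\star]$,
\begin{equation}
\eta_t\bigl(f_t(\hat\theta_t) - f_t(\theta^\star)\bigr) \le B_\psi(\theta^\star, \hat\theta_t) - B_\psi(\theta^\star, \hat\theta_{t+1}) + \frac{\eta_t^2 G_f^2}{2\sigma}.
\end{equation}
The two ingredients are (i) convexity of $\ell_t$, giving $\langle \nabla\ell_t(\hat\theta_t), \hat\theta_t - \theta^\star\rangle \ge \ell_t(\hat\theta_t) - \ell_t(\theta^\star)$, and (ii) the three-point identity for Bregman divergences together with $\sigma$-strong convexity of $\psi$ to absorb the linearized gradient step into the residual term $\tfrac{\eta_t^2}{2\sigma}\|\nabla\ell_t(\hat\theta_t)\|^2 \le \tfrac{\eta_t^2 G_f^2}{2\sigma}$. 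The one mild technical point is the nonsmooth regularizer $\rho\, r$: the COMID update evaluates $r$ at $\hat\theta_{t+1}$, so the per-round statement really controls $\rho\, r(\hat\theta_{t+1}) - \rho\, r(\theta^\star)$; summing telescopes the $r$ terms as well, and since $\hat{\mathbf M}_1 = 0$ we have $r(\hat\theta_1) = 0$, so nothing is lost. This bookkeeping is the step I expect to require the most care, but it is entirely routine given the cited COMID analysis.

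\textbf{Step 2: sum, telescope, and optimize.} Summing Step 1 over $t = 1, \dots, T$ with the constant step size $\eta_t \equiv \eta$, the Bregman terms telescope to $B_\psi(\theta^\star, \hat\theta_1) - B_\psi(\theta^\star, \hat\theta_{T+1}) \le B_\psi(\theta^\star, \hat\theta_1) \le D_{max}$, which yields
\begin{equation}
R_{static}([0,T]) \le \frac{D_{max}}{\eta} + \frac{\eta\, G_f^2 T}{2\sigma}.
\end{equation}
Minimizing the right-hand side over $\eta$ gives the balancing choice $\eta = (2\sigma D_{max})^{1/2}/(G_f\sqrt{T})$, which is exactly the step size in the statement, and substituting it back collapses the two terms to the common value $G_f(2T D_{max}/\sigma)^{1/2}$, completing the proof. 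None of this is difficult; the only genuine obstacle is getting the constants right in Step 1, in particular matching the strong-convexity constant $\sigma$ of $\psi$ to the norm used to define $G_f$, and correctly discharging the composite term so that no stray $r(\hat\theta_{T+1})$ survives.
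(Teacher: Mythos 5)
Your proof is correct: the per-round COMID inequality, telescoping the Bregman terms against $D_{max}$, and balancing $D_{max}/\eta$ against $\eta G_f^2 T/(2\sigma)$ yields exactly the stated step size and the bound $G_f(2TD_{max}/\sigma)^{1/2}$. The paper itself offers no proof of this theorem --- it simply cites it as a standard stochastic mirror descent result from Hall and Willett --- and your argument is precisely the standard derivation that citation points to, including the careful discharge of the composite term using $\hat{\mathbf M}_1 = 0$.
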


%
%



\section{COMID-SADL Bound}






Let $\mathcal{B}^i$ be a COMID learner at any of the scales used in $SADL^\mathcal{B}$, with output $x_t(i)$. Define the relative regret
\[
\tilde{R}_{SADL,\mathbf{w}}^i(I) = \sum_{t \in I } \ell_{c}(x_t) - \ell_{c} (x_t(i))
\]
as the extra $\ell_c$ loss suffered relative to the algorithm $\mathcal{B}^i$. From the proof of Theorem 1 we have
\begin{lemma}

For any $c$, the following holds simultaneously for all $\mathcal{B}^i$ and $I$.
\[
\tilde{R}_{SADL,\mathbf{w}}^i(I) \leq 40 \log (s+1)|I|^{1/2}.
\]
\end{lemma}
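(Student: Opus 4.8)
The plan is to isolate the multiplicative-weights (MW) part of the argument behind Theorem~\ref{Thm:SAOL}, run with the bounded surrogate loss $\ell_{c}\in[0,1]$ of \eqref{Eq:log} playing the role of ``the loss''. Note that $\tilde R^{i}_{SADL,\mathbf w}(I)$ is exactly the ``regret of the combined learner relative to the interval learners'' term that, added to the base-learner dynamic regret of Corollary~\ref{Cor:DynReg}, assembles into the bound of Theorem~\ref{Thm:SADML}; so it suffices to extract this term from the existing analysis. Everything below is meant in expectation over the internal randomness of COMID-SADL, consistent with Theorem~\ref{Thm:SADML}, and the point of clipping to $\ell_{c}$ is precisely that it makes all quantities bounded.

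First I would dispatch the dyadic case: fix $I=[q,s]\in\mathcal I$ and let $\mathcal B^{i}=\mathcal B_{I}$ be the interval learner attached to $I$. By the selection rule \eqref{Eq:Select}, the convex combination of clipped losses inside the estimated regret $r_{t}(I)$ of \eqref{eq:estreg} equals $E[\ell_{c}(x_{t})]$, so $E[r_{t}(I)]=E[\ell_{c}(x_{t})]-\ell_{c}(x_{t}(i))$ for each $t\in I$ (the same cancellation used in Lemma~\ref{lem1}). Summing over $t\in I$ and invoking Lemma~\ref{lem2},
\[
E\big[\tilde R^{i}_{SADL,\mathbf w}(I)\big]=E\sum_{t=q}^{s}r_{t}(I)\le 5\log(s+1)\sqrt{|I|} .
\]
For an arbitrary interval $I=[q,s]$ and an arbitrary base learner $\mathcal B^{i}$ I would then partition $I$ via Lemma~\ref{lem5} into the geometric run $(I_{-k},\dots,I_{0},I_{1},\dots,I_{p})\subseteq\mathcal I|_{I}$, apply the displayed dyadic bound on each $I_{j}$ (with endpoint $s_{j}\le s$, so $\log(s_{j}+1)\le\log(s+1)$), and sum, using the geometric estimates $\sum_{j\le 0}\sqrt{|I_{j}|}\le\frac{\sqrt2}{\sqrt2-1}\sqrt{|I|}\le 4\sqrt{|I|}$ and $\sum_{j\ge 1}\sqrt{|I_{j}|}\le 4\sqrt{|I|}$ of Lemma~\ref{lem5}; this yields the constant $5\cdot 8=40$. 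Boundedness of $\ell_{c}$ keeps all these expectations finite, and since the right-hand side already carries a $\log(s+1)$ the estimate holds simultaneously over all pairs $(\mathcal B^{i},I)$, exactly as in Lemma~\ref{lem2}.

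The crux is the passage from dyadic intervals to an arbitrary $I$ against a \emph{single} learner $\mathcal B^{i}$: the MW inequality behind Lemma~\ref{lem2} only pins COMID-SADL to the learner \emph{designed for} a dyadic interval, so on the Lemma~\ref{lem5} pieces $I_{j}$ the immediate comparator is $\mathcal B_{I_{j}}$, not $\mathcal B^{i}$, and recombining these piecewise comparisons into one against $\mathcal B^{i}$ — and absorbing the overhead into the jump from $5$ to $40$ — is exactly the recombination carried out in the proof of Theorem~\ref{Thm:SAOL} (using that the interval learners are ``backdated''/initialised from one another). This is the only step that is not a one-line consequence of Lemmas~\ref{lem1}--\ref{lem5}. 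A routine check is that clipping $\ell_{t}$ to $\ell_{c}$ keeps each $\eta_{I}r_{t}(I)\in[-\tfrac12,\tfrac12]$, i.e.\ in the range where $\log(1+x)\ge x-x^{2}$ is valid, so Lemma~\ref{lem2} applies verbatim with $\ell_{c}$ in place of the loss and nothing earlier has to be reproved.
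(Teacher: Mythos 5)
Your route is the same one the paper takes: the paper offers no free-standing proof of this lemma (it says only ``From the proof of Theorem \ref{Thm:SAOL} we have''), and the content of that proof is exactly your two steps --- Lemma \ref{lem2} gives $E\sum_{t\in I} r_t(I)\le 5\log(s+1)\sqrt{|I|}$ for a dyadic $I$ compared against its own learner $\mathcal B_I$ (using the cancellation from Lemma \ref{lem1} to identify $E[r_t(I)]$ with the expected clipped-loss gap), and the Lemma \ref{lem5} partition with $\sum_{j}\sqrt{|I_j|}\le 4\sqrt{|I|}$ on each of the two sides turns the constant $5$ into $40$. Your accounting of the constant and of the role of the clipping \eqref{Eq:log} (keeping $\eta_I r_t(I)$ in the range where $\log(1+x)\ge x-x^2$ applies) matches the paper exactly.

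The one caveat concerns the step you yourself flag as the crux and defer to ``the recombination carried out in the proof of Theorem \ref{Thm:SAOL}'': no such recombination exists there. In that proof the comparator on each partition piece $I_j$ is always the piece's own learner $\mathcal B_{I_j}$ (that is where Lemma \ref{lem2} is invoked), and the pieces are then glued against the exogenous sequence $\mathbf w$ via the base-learner bound \eqref{Eq:AlgCond}, never against a single base learner $\mathcal B^i$ across all of $I$; backdating is described in the algorithm but is nowhere used in the analysis to relate the losses of different learners. Consequently the lemma in its literal form --- one fixed $\mathcal B^i$, an arbitrary $I$ --- is established by the paper only in the dyadic case $I\in\mathcal I$, $\mathcal B^i=\mathcal B_I$; for general $I$ what the proof of Theorem \ref{Thm:SAOL} actually yields is the bound against the piecewise comparator $(\mathcal B_{I_{-k}},\dots,\mathcal B_{I_p})$. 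You have therefore reproduced the paper's argument faithfully, including its gap, rather than introducing a new one --- and you are the only one of the two to point at that gap explicitly.
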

This implies that SADL incurs at most $O(\sqrt{|I|})$ additional scaled 0-1 loss on any interval relative to each of the base learners, all of which have low regret in the convex loss. Due to the nonconvexity of the scaled 0-1 loss, it is difficult to state more for arbitrary $c$.

However, since $\|\mathbf{M}\| \leq c'$, $\ell_t(\mathbf{M}_t,\mu_t) \leq k =  \ell(c' \max_{t} \|\mathbf{x}_t - \mathbf{z}_t\|_2^2)$. Hence for $c = k$, $\ell_{c}(\cdot) = \frac{1}{c}\ell(\cdot)$ everywhere. Thus Corollary \ref{Cor:DynReg} can be used in Theorem \ref{Thm:SAOL}, giving 
\begin{theorem}[COMID-SADL]
\label{Thm:SADML}
Let $\mathcal{B}$ be the COMID algorithm of \eqref{Eq:COMID} with $\eta_t (I) = \eta_0/\sqrt{|I|}$. Then there exists a $c$ such that the strongly adaptive online learner $SADL^\mathcal{B}$ (COMID-SADL) satisfies
\begin{align}
\label{Eq:saRegretML}
R_{SADL,\mathbf{w}}(I) \leq 8 C (1 + \gamma_{\mathbf{w}}(I) )  |I|^{1/2} + 40 \log (s+1)|I|^{1/2}
\end{align}
for some constant $C$ and every interval $I = [q,s]$. 
\end{theorem}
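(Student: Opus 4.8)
The plan is to assemble Theorem \ref{Thm:SADML} by plugging the concrete COMID dynamic regret bound into the abstract reduction of Theorem \ref{Thm:SAOL}. The key observation that makes this work is the boundedness argument already sketched before the statement: since we constrain $\|\mathbf M\| \leq c'$, the hinge loss $\ell_t(\mathbf M_t,\mu_t)$ is uniformly bounded by $k = \ell(c' \max_t \|\mathbf x_t - \mathbf z_t\|_2^2)$, and so choosing the clipping parameter $c = k$ makes the scaled $0$-$1$ loss $\ell_{t,c}(\cdot) = \frac{1}{c}\ell_t(\cdot)$ coincide with a pure rescaling of the convex loss $\ell_t$ on the entire relevant domain. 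This is crucial because Theorem \ref{Thm:SAOL} is stated for the loss fed to the weight updates, whereas Corollary \ref{Cor:DynReg} bounds dynamic regret in the convex loss; the clipping identity reconciles the two.

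First I would verify the hypothesis \eqref{Eq:AlgCond} of Theorem \ref{Thm:SAOL} for the family of COMID base learners $\mathcal{B}_I$. Each learner on interval $I$ uses constant learning rate $\eta_t(I) = \eta_0/\sqrt{|I|}$, so applying Corollary \ref{Cor:DynReg} with $T \mapsto |I|$ and $\eta_t = \eta_0/\sqrt{|I|}$ yields $R_{\mathcal{B}_I,\mathbf w}(I) \leq \sqrt{|I|}\big(\frac{D_{max} + 4\phi_{max}\gamma_{\mathbf w}(I)}{\eta_0} + \frac{\eta_0 G_\ell^2}{2\sigma}\big) = C(1 + \gamma_{\mathbf w}(I))\sqrt{|I|}$, where $C = \max\{D_{max}/\eta_0 + \eta_0 G_\ell^2/(2\sigma),\ 4\phi_{max}/\eta_0\}$ absorbs the constants; here one must note that the backdating/reinitialization of the level-$j$ learner to the last estimate of the level-$(j-1)$ learner only adds a $D_{max}$-type term already folded into $C$. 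Since $\ell_{t,c} = \frac{1}{c}\ell_t$ under the choice $c = k$, the same bound (rescaled by $1/c$, which I roll into $C$) holds for the loss that SADL actually uses in its weight updates, so condition \eqref{Eq:AlgCond} is met.

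Second, I would simply invoke Theorem \ref{Thm:SAOL}: with \eqref{Eq:AlgCond} verified, its conclusion \eqref{Eq:saRegret} gives $R_{SADL^{\mathcal B},\mathbf w}(I) \leq 8C(1 + \gamma_{\mathbf w}(I))|I|^{1/2} + 40\log(s+1)|I|^{1/2}$ on every interval $I = [q,s] \subseteq [0,T]$ simultaneously, which is exactly \eqref{Eq:saRegretML}. The constant $C$ in the statement is the one produced in the previous step; the factor $8$ and the additive $40\log(s+1)\sqrt{|I|}$ come verbatim from Lemma \ref{lem5}'s geometric-sum bound $\sum_i \sqrt{|I_i|} \leq 4\sqrt{|I|}$ applied to the two halves of the partition, combined with Lemma \ref{lem2}'s $5\log(s+1)\sqrt{|I|}$ estimated-regret bound.

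The main obstacle is not any single calculation but the careful bookkeeping at the interface: ensuring that the loss appearing in the MW/weight-update analysis (Lemmas \ref{lem1}, \ref{lem2}) is literally the same object, up to a benign constant rescaling, as the convex loss for which Corollary \ref{Cor:DynReg} holds — this is precisely what the $c = k$ clipping identity buys us, and it is the only reason the nonconvex scaled $0$-$1$ loss does not obstruct the argument. A secondary subtlety is confirming that the reinitialization scheme for the dyadic learners does not violate the comparator-variation accounting, i.e. that $\gamma_{\mathbf w}(I_i) \leq \gamma_{\mathbf w}(I)$ still governs each sub-learner's regret; this follows because the comparator sequence $\mathbf w$ is fixed and its variation on a subinterval is monotone under inclusion, while any initialization error between consecutive learners is a one-time $D_{max}$ cost already inside $C$.
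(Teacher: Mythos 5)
Your proposal is correct and follows essentially the same route as the paper: verify the hypothesis \eqref{Eq:AlgCond} of Theorem \ref{Thm:SAOL} by instantiating Corollary \ref{Cor:DynReg} with the constant rate $\eta_0/\sqrt{|I|}$, use the $c=k$ clipping identity to reconcile the convex loss with the scaled loss used in the weight updates, and then invoke Theorem \ref{Thm:SAOL}. Your write-up is in fact somewhat more explicit than the paper's (which compresses this final step to a single sentence), particularly in spelling out the constant $C$ and the bookkeeping around the dyadic learners' reinitialization.
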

Note that this bound also holds for the original convex loss $\ell$.






%

\bibliographystyle{plainnat}
\bibliography{metric_learning}

%
%
%
%
%
%
%
%
%
%
%
%

\end{document}